
\documentclass[nohyperref]{article}

\usepackage{microtype}
\usepackage{graphicx}
\usepackage{subfigure}
\usepackage{booktabs} 

\usepackage{hyperref}



\usepackage[accepted]{icml2022}

\usepackage{amsmath}
\usepackage{amssymb}
\usepackage{mathtools}
\usepackage{amsthm}

\usepackage{graphicx}
\usepackage{subfigure}
\usepackage{booktabs} 

\usepackage{dsfont}
\usepackage{multirow}
\usepackage{float,paralist}
\usepackage{transparent}
\usepackage{stfloats}
\usepackage{url,wrapfig}

\mathtoolsset{showonlyrefs}

\newcommand{\R}{\mathbb{R}}
\newcommand{\norm}[1]{\left\|#1\right\|}
\DeclarePairedDelimiterX{\inner}[1]{\langle}{\rangle}{#1}
\newcommand{\supp}{\text{supp}}


\theoremstyle{plain}
\newtheorem{theorem}{Theorem}[section]
\newtheorem{proposition}[theorem]{Proposition}
\newtheorem{lemma}[theorem]{Lemma}
\newtheorem{observation}[theorem]{Observation}

\newtheorem*{lemma*}{Lemma}
\newtheorem*{proposition*}{Proposition}

\theoremstyle{definition}

\theoremstyle{remark}
\newtheorem{remark}[theorem]{Remark}

\usepackage[textsize=tiny]{todonotes}

\icmltitlerunning{Multi Resolution Analysis (MRA) for Approximate Self-Attention}

\begin{document}

\twocolumn[
\icmltitle{Multi Resolution Analysis (MRA) for Approximate Self-Attention}



\icmlsetsymbol{equal}{*}

\begin{icmlauthorlist}
\icmlauthor{Zhanpeng Zeng}{wisc}
\icmlauthor{Sourav Pal}{wisc}
\icmlauthor{Jeffery Kline}{amfam}
\icmlauthor{Glenn Fung}{amfam}
\icmlauthor{Vikas Singh}{wisc}
\end{icmlauthorlist}

\icmlaffiliation{wisc}{University of Wisconsin, Madison, USA}
\icmlaffiliation{amfam}{American Family Insurance, Madison, USA}

\icmlcorrespondingauthor{Zhanpeng Zeng}{zzeng38@wisc.edu}

\icmlkeywords{Machine Learning, ICML}

\vskip 0.3in
]


\begin{figure*}[!hbp]
\centering
\vspace{-0.16in}
\includegraphics[width=6.5in]{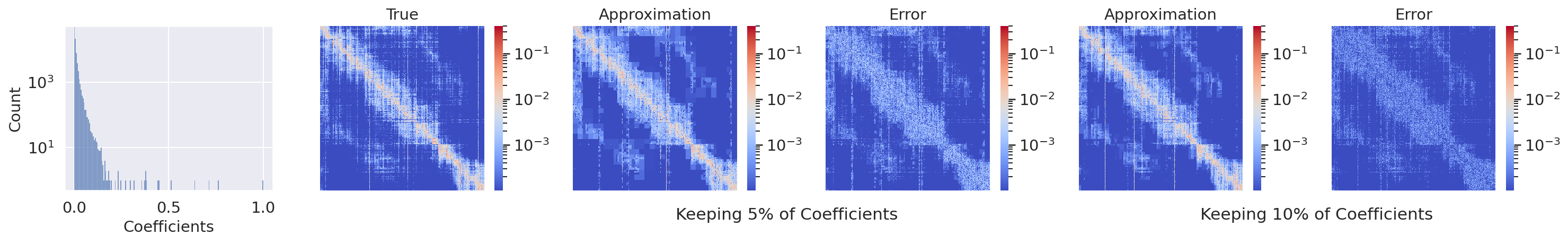}
\vspace{-0.15in}
\caption{The left plot is the histogram of wavelet coefficients in log scale for 2D Haar basis. More than $95$\% of coefficients have a magnitude less than $0.005$. The second plot is the true self-attention matrix $\mathcal{A}$. The third and fourth plots are the approximation and error by keeping top $5$\% of coefficients. The fifth and last plots are similar for keeping top $10$\% of coefficients. The errors $||\hat{\mathcal{A}} - \mathcal{A}||_F$ for $\{$MRA, low rank, sparsity$\}$ by keeping 10\% of $\{$coefficients, ranks, nonzero entries$\}$, are $0.30, 1.24, 0.39$, respectively. This shows  performance benefits of MRA compared to low rank and sparsity. We discuss low rank and sparsity in more detail in \S\ref{sec:sparse_low_rank_link}. }
\label{fig:haar_approx}
\vspace{-0.1in}
\end{figure*}


\printAffiliationsAndNotice{}  

\begin{abstract}

Transformers have emerged as a preferred
model for many tasks in natural langugage processing and vision. Recent efforts on training and deploying 
Transformers more 
efficiently have identified many strategies 
to approximate the self-attention 
matrix, a key module in a Transformer architecture.
Effective ideas include various prespecified sparsity patterns, low-rank basis expansions and combinations thereof. 
In this paper, we revisit classical Multiresolution 
Analysis (MRA) concepts such as Wavelets, whose 
potential value in this setting remains underexplored 
thus far. We show that simple 
approximations based on empirical feedback and design choices informed by modern hardware and implementation challenges, eventually yield a MRA-based 
approach for self-attention with an excellent 
performance profile across most criteria of interest. 
We undertake an extensive set of experiments and demonstrate that this 
multi-resolution scheme outperforms 
most efficient self-attention proposals and is favorable for both short and long sequences. Code is available at \url{https://github.com/mlpen/mra-attention}.

\vspace{-0.11in}


\end{abstract}

\section{Introduction}

\label{sec:introduction}

The Transformer model proposed in  \cite{vaswani2017attention} 
is the architecture of choice for a number of tasks 
in natural language processing (NLP) as well 
as 
vision. 
The ``self-attention'' mechanism within Transformers, 
and specifically, its extension referred to as Multi-Head 
Self-Attention, enable 
modeling dynamic token dependencies, 
and play a key role in the overall 
performance profile of the model. 
Despite these advantages, the $O(n^2)$ complexity of 
the self-attention
(where $n$ is the sequence length) 
is a bottleneck, especially for longer sequences. 
Consequently, over the last year or so, 
extensive effort has been devoted 
in deriving methods that mitigate this quadratic cost -- 
a wide choice of algorithms with 
linear (in the sequence length) complexity are now available \cite{Wang2020LinformerSW, choromanski2020rethinking, yoso, beltagy2020longformer, zaheer2020big}.

The aforementioned body of work 
on {\em efficient} transformers leverages 
the observation that the self-attention matrix has a 
parsimonious representation -- the mechanics of how this 
is modeled and exploited at the algorithm level, varies 
from one method to the other. 
For instance, we may ask that 
self-attention has a pre-specified form of sparsity (for instance, diagonal or banded), see \cite{beltagy2020longformer, zaheer2020big}. Alternatively, we may 
model self-attention globally as a low-rank matrix, successfully utilized in \cite{Wang2020LinformerSW, xiong2021nystromformer, choromanski2020rethinking}. Clearly, 
each modeling choice entails its own form of 
approximation error, which we can measure empirically 
on existing benchmarks. 
Progress has been brisk in improving these approximations:  recent proposals have investigated 
a hybrid global+local strategy based on invoking a robust 
PCA style model \cite{chen2021scatterbrain} and hierarchical (or H-) matrices \cite{htransformer1d}, 
which we will draw a contrast with. 

{\bf MRA.} Recall that the hybrid global+local intuition above  
has a rich classical treatment 
formalized under Multiresolution analysis (MRA) methods,
and Wavelets are a prominent example \cite{mallat1999wavelet}. 
While the use of wavelets for signal processing goes 
back at least three decades (if not longer), 
their use in machine 
learning especially for graph based datasets,  
as a numerical preconditioner, and even for 
matrix decomposition problems 
has seen a resurgence  \cite{ kondor2014multiresolution, ithapu2017incremental,gavish2010multiscale, lee2007treelets, hammond2011wavelets, coifman2006diffusion}. 
The extent to 
which classical MRA ideas (or even their heuristic 
forms) can guide efficiency in 
Transformers is largely unknown. Fig. \ref{fig:haar_approx} shows that, given 
a representative 
self-attention matrix, only $10\%$ of 
the coefficients are sufficient for a high 
fidelity reconstruction. 
At a minimum, we see that the hypothesis of 
evaluating a MRA-based self-attention within a Transformer 
model may have merit. 

{\bf Contributions.} The {\bf goal}
of this paper is to investigate the specific modifications, 
adjustments and approximations needed to make the 
MRA idea 
operational in Transformers. It turns out that modulo some small compromises (on the theoretical side), MRA-based self-attention shows  
excellent performance across the board -- it is competitive with standard self-attention and outperforms most of baselines while maintaining significantly high time and memory efficiency on both short and long sequences. 



\section{Preliminaries: Self-attention and Wavelets}

We briefly review self attention and wavelet decomposition, two concepts that we will use throughout the paper.

\subsection{Self-attention}

Given embedding matrices $Q, K, V \in \R^{n \times d}$ representing $n$ $d$-dimensional feature vectors for queries, key and values, respectively, self-attention is defined as
\begin{equation}
\begin{split}
Z = \text{Softmax}\left( \underbrace{Q K^T}_{\mathcal{P}} \right) V = D \underbrace{\exp(\mathcal{P})}_{\mathcal{A}} V
\end{split}
\label{eq:softmax-attention}
\end{equation}
where $D$ is a $n\times n$ diagonal matrix that normalizes each row of the $\mathcal{A}$ matrix such that the row entries sum up to $1$. 
For notational simplicity, the scaling factor $\sqrt{d}$ and linear projections applied to $Q, K, V$ are omitted. An explicit calculation of $\mathcal{A}$ and taking its product with $V$ incurs a $O(n^2)$ cost (if $d$ is treated as fixed/constant for complexity analysis purposes), a serious resource bottleneck and the core 
focus of the existing literature on efficient 
Transformers.

\textbf{Related Work on Efficient Transformers.} A number of efficient self-attention methods have been proposed to reduce the $O(n^2)$ cost. Much of this literature can be, roughly speaking, divided into two
categories: low rank and sparsity. 
Linformer \cite{Wang2020LinformerSW} shows that self-attention matrices are low rank and proposes to learn a projection -- projecting the sequence length dimension to lower dimensions. 
Performer \citep{choromanski2020rethinking} and Random Feature Attention \citep{peng2021rfa} view self-attention matrices as kernel matrices of  infinite feature maps and propose using finite random feature maps to approximate the kernel matrices. 
Nystr\"{o}mformer \cite{xiong2021nystromformer} and SOFT \cite{SOFT} use a 
Nystr\"{o}m method for matrix approximation to approximate the self-attention matrices. 

A number of methods also leverage the sparsity of self-attention matrices. 
By exploiting a high dependency within a local context, Longformer \cite{beltagy2020longformer} proposes a sliding window attention with global attention on 
manually selected tokens. 
In addition to the sparsity used in Longformer, Big Bird \cite{zaheer2020big} adds random sparse attention to further refine the approximation. 
Instead of a prespecified sparsity support, Reformer \cite{Kitaev2020ReformerTE} uses hashing (LSH) to compute self-attention only within approximately nearby tokens, 
and 
YOSO \cite{yoso} uses the collision probability of LSH as attention weights and then, a LSH-based sampler to achieve linear complexity.

The discussion in \cite{chen2021scatterbrain} suggests that approximations relying solely on low rank or sparsity are limited and a hybrid model via robust PCA offers better approximation. Scatterbrain \cite{chen2021scatterbrain} uses a sparse attention + low rank attention strategy to avoid the cost of robust PCA. In \S\ref{sec:sparse_low_rank_link}, we discuss some limitations of low rank and sparsity for self-attention approximation, and show that a special form of our MRA approximation can offer a good solution for a relaxation of robust PCA. 


Independent of our work, recently, H-Transformer-1D \cite{htransformer1d} proposed a hierarchical self-attention where the self-attention matrices have a low rank structure on the off-diagonal entries and attention is precisely calculated for the on-diagonal entries. This is also a form of multiresolution approximation for self-attention although the lower resolution for distant tokens may limit its ability to capture precise long range dependencies. 
While a prespecified structure can indeed provide an effective approximation scheme in specific settings, it would be desirable to avoid restriction to a fixed structure, if possible.

\subsection{Wavelet Transform}
A wavelet transform 
decomposes a signal into different scales and locations represented by a set of scaled and translated copies of a {\em fixed} function. This fixed function $\varphi$ is called a mother wavelet, and the scaled and translated copies are called child wavelets specified by two factors, scale $s$ and translation $t$. 
\begin{equation}
\varphi^{s}_{t}(x) = \frac{1}{\sqrt{s}} \varphi(\frac{x - t}{s})
\end{equation}
Here, $s$ controls the ``dilation'' or inverse of the frequency of wavelet, while $t$ controls the location (e.g., time). These scaled/translated versions of mother wavelets play a key role in MRA. Given a choice of $\varphi$, 
the wavelet transform maps a function $f$ to coefficients 
\begin{equation}
\alpha^{s}_{t} = \langle f, \varphi^{s}_{t} \rangle = \int f(x) \varphi^{s}_{t}(x) dx
\end{equation}
The coefficient $\alpha^{s}_{t}$ 
captures the measurement of $f$ at scale $s$ and location $t$. 


\section{MRA view of Self-attention}

\label{sec:mra}

To motivate the use of MRA, in 
\S\ref{sec:introduction}, we used Fig. \ref{fig:haar_approx} to check 
how a 2D Haar wavelet basis decomposes the target matrix $\mathcal{A}$ 
into terms involving different scales and translations, and terms with larger coefficients suffice for a
good approximation of $\mathcal{A}$. 
But the reader will notice that the calculation of the coefficients requires access to the full matrix $\mathcal{A}$. 
Our discussion below will start from a formulation which will still need the full matrix $\mathcal{A}$. Later, in \S\ref{sec:practical_approx}, by exploiting the locality of $Q$ and $K$, we will be able to derive an approximation with reduced complexity (without access to $\mathcal{A}$). 




For simplicity, we assume that the sequence length $n = 2^k$ for some integer $k$. Inspired by the Haar basis and its ability to adaptively approximate while preserving locality, we apply a pyramidal MRA scheme. 
We consider a decomposition of $\mathcal{A}$ using a set of simpler unnormalized components $B^{s}_{x, y} \in \R^{n \times n}$ defined as 
\begin{equation}
(B^{s}_{x, y})_{i, j} = 
\begin{cases}
    1  & \text{if } sx - s < i \leq sx, sy - s < j \leq sy \\
    0  & \text{otherwise}
\end{cases}
\label{eq:component_matrix}
\end{equation}
for $s \in \{1, 2, 4, \cdots, n\}$  and $x, y \in \{1, \cdots, n / s\}$. Here, $(sx - s, sx] \times (sy - s, sy]$ is the support of $B^{s}_{x, y}$, and $s$ represents the scale of the components, i.e., a smaller $s$ denotes higher resolutions and vice versa. Also, $x, y$ denote the translation of the components. 

{\bf Why not Haar basis?} The main reason for using the form in 
\eqref{eq:component_matrix} instead of a 2D Haar basis directly 
is implementation driven, and will be discussed shortly in Remark \ref{remark:reason_not_use_haar}.
For the moment, we can observe that \eqref{eq:component_matrix} is an overcomplete frame for $\mathbb{R}^{n \times n}$. As shown in Fig. \ref{fig:compoent_compare}, frame \eqref{eq:component_matrix} has {\em one extra} scale (with support on a single entry) compared to the Haar basis (4 rows versus 3 rows). Except for this extra scale, \eqref{eq:component_matrix} has the same support as the Haar basis at different scales. In addition, \eqref{eq:component_matrix} provides scaled and translated copies of the ``mother'' component, similar to Haar. 

\begin{figure}[!htbp]
\centering
\begin{subfigure}
    \centering
    \includegraphics[width=1.2in]{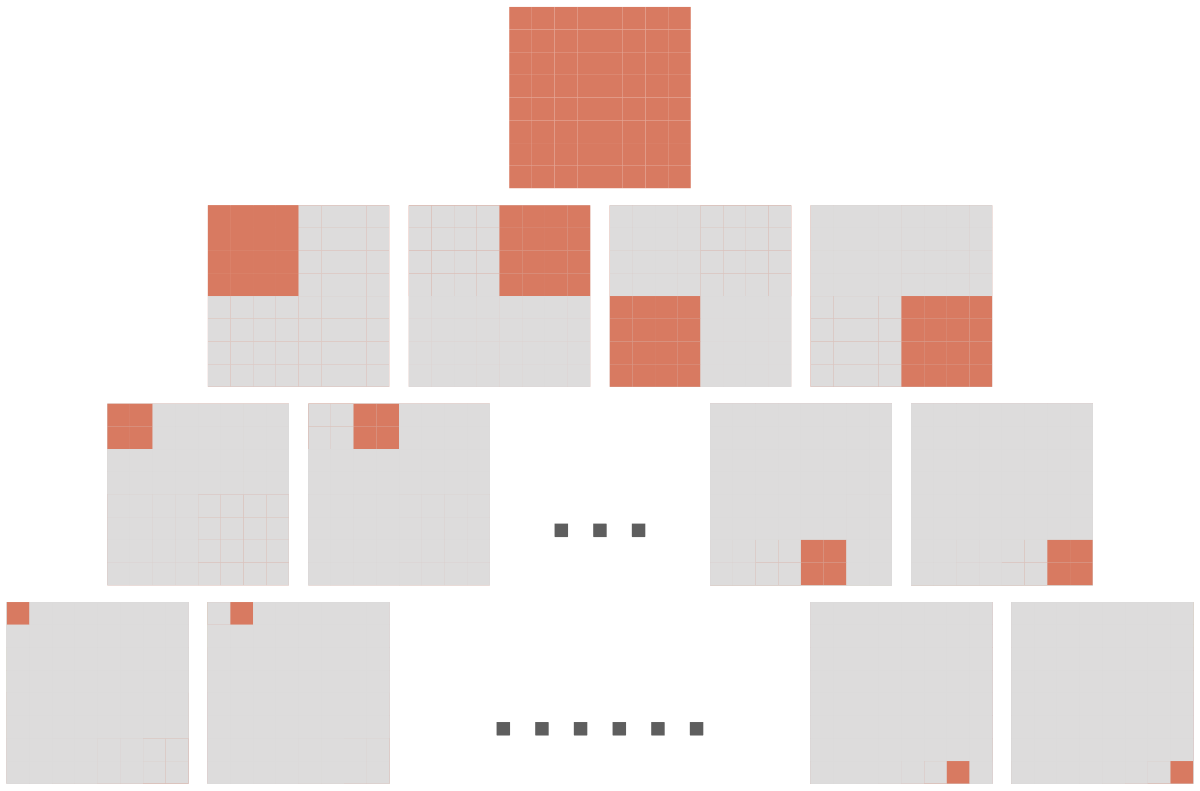}
\end{subfigure}
\hspace{0.25in}
\begin{subfigure}
    \centering
    \includegraphics[width=1.2in]{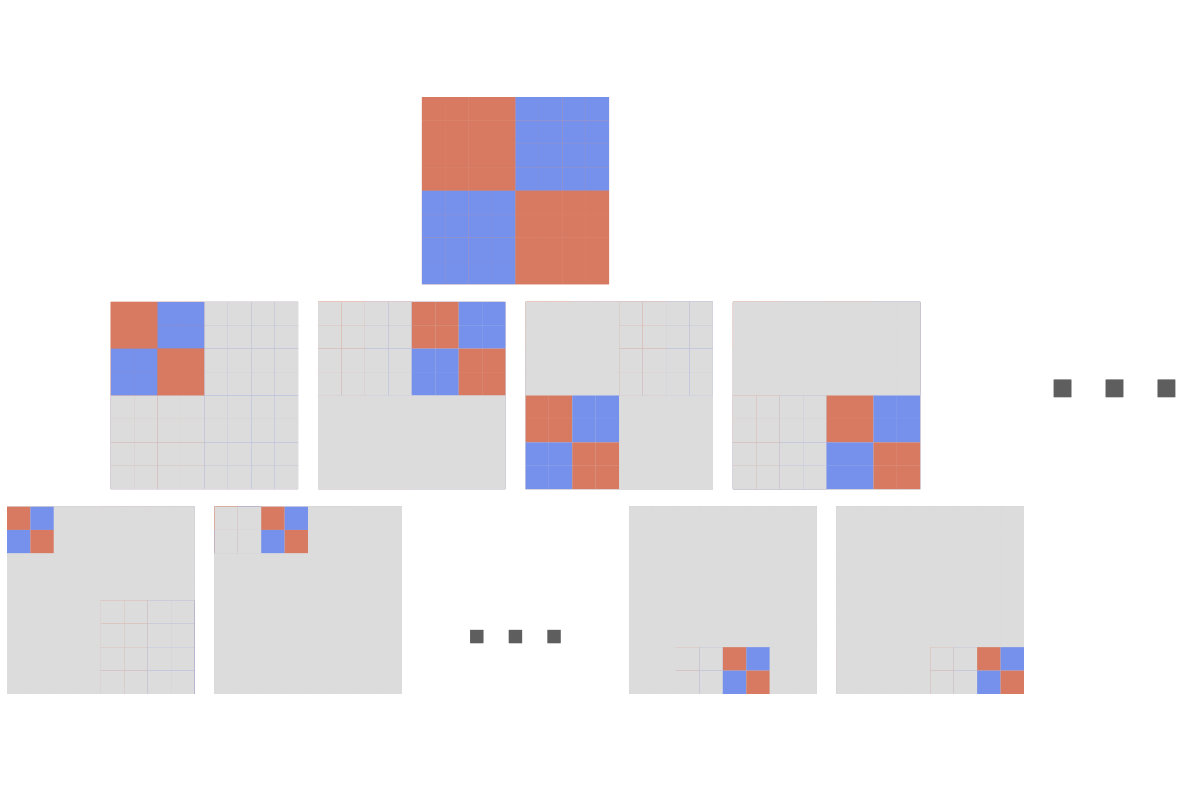}
\end{subfigure}
\vspace{-0.1in}
\caption{The left plot is the overcomplete frame defined in \eqref{eq:component_matrix}, which consists of $85$ matrices for $n = 8$. The right plot is a 2D generalization of Haar basis,
which consists of three groups of $21$ self-similar matrices 
plus a constant matrix (all entries equal to $\frac{1}{n}$). 
The color red, blue, and gray means positive, negative, and $0$, respectively. Notice that \eqref{eq:component_matrix} does not include negative entries: it involves more components but makes the formulation simpler.}
\vspace{-0.15in}
\label{fig:compoent_compare}
\end{figure}

Let $\mathcal{I} = \{B^{s}_{x, y}\}$ be a set of components for the possible scales and translations, then we decompose
\begin{equation}
\mathcal{A} = \sum_{B^{s}_{x, y} \in \mathcal{I}} \alpha^{s}_{x, y} B^{s}_{x, y}
\label{eq:decomposition_of_A}
\end{equation}
for some set of coefficients $\alpha^{s}_{x, y}$. Since \eqref{eq:component_matrix} is overcomplete, the coefficients $\alpha^{s}_{x, y}$ are not unique. We specifically compute the coefficients $\alpha^{s}_{x, y}$ as follows. Let $E_{n} = \mathcal{A}$ and
\begin{equation}
\begin{split}
\quad
\alpha^{s}_{x, y} &= \frac{1}{s^2} \inner{B^{s}_{x, y}, E_{s}} \\
E_{s/2} &= E_{s} - \sum_{B^{s}_{x, y} \in \mathcal{I}} \alpha^{s}_{x, y} B^{s}_{x, y}
\end{split}
\label{eq:decomposition_procedure}
\end{equation}
Here, the $E_{s/2}$ denotes the residuals of the higher frequencies. 
At each scale $s$, $\alpha^{s}_{x, y}$ is the optimal solution of the least squares problem minimizing $||E_{s/2}||_F^2$.  Intuitively, the approximation procedure starts from the coarsest approximation of $\mathcal{A}$ which only consists of the lowest frequency, then the procedure refines the approximation by adding residuals of higher frequencies. 

{\bf Parsimony.} We empirically observe that the coefficients of most components are near zero, so we can represent $\mathcal{A}$ with only a few components while maintaining the accuracy of approximation. Specifically, we can find a small subset $\mathcal{J} \subset \mathcal{I}$, the corresponding coefficients $(\alpha^{s}_{x, y})'$ computed following \eqref{eq:decomposition_procedure}, and the resulting approximation
\begin{equation}
\hat{\mathcal{A}}^* = \sum_{B^{s}_{x, y} \in \mathcal{J}} (\alpha^{s}_{x, y})' B^{s}_{x, y}
\label{eq:approximation_of_A}
\end{equation}
with a negligible approximation error $||\hat{\mathcal{A}}^* - \mathcal{A}||_F$. 

%
But $\hat{\mathcal{A}}^*$ in \eqref{eq:approximation_of_A} does not suggest any interesting property of the approximation. But we can check an equivalent form of $\hat{\mathcal{A}}^*$. Denote the average of $\mathcal{A}$ over the support of $B^{s}_{x, y}$ to be
\begin{equation}
\begin{split}
\mu_{s, x, y}^* &= \frac{1}{s^2} \inner{B^{s}_{x, y}, \exp(\mathcal{P})} \\
\end{split}
\label{eq:average_of_exp}
\end{equation}
It turns out that the entries of $\hat{\mathcal{A}}^*$ in \eqref{eq:approximation_of_A} can be rewritten as
\begin{equation}
\begin{split}
(\hat{\mathcal{A}}^*)_{i,j} &= \mu_{s, x, y}^* \\
\end{split}
\label{eq:approximation_result}
\end{equation}
where $(s, x, y)$ is the index of $B^{s}_{x, y} \in \mathcal{J}$ that has the smallest support region and is supported on $(i, j)$. But if $(i, j) \not\in \supp(B^{s}_{x, y})$ for all $B^{s}_{x, y} \in \mathcal{J}$, then $(\hat{\mathcal{A}}^*)_{i,j} = 0$. The $(i, j)$ entry of $\hat{\mathcal{A}}^*$ is precisely approximated by the average of $\mathcal{A}$ over the smallest support region for $B^{s}_{x, y} \in \mathcal{J}$ containing $(i, j)$. In other words, the procedure uses the highest resolution possible for a given $\mathcal{J}$ as an approximation. We discuss and show how we obtain \eqref{eq:approximation_result} from \eqref{eq:approximation_of_A} in \S\ref{sec:appendix_analysis}.
The reader will notice that rewriting $\hat{\mathcal{A}}^*$ as \eqref{eq:approximation_result} is possible due to our modifications to the Haar basis in \eqref{eq:component_matrix}.

\begin{remark}
\label{remark:reason_not_use_haar}
Consider using a Haar decomposition and let $\mathcal{L}$ be the subset of basis with nonzero coefficients. The approximation $(\hat{\mathcal{A}}_{\rm Haar})_{i,j}$ depends on {\em all} $\phi \in \mathcal{L}$ which are supported on $(i,j)$. For example, in the worst case, the coefficients of all $\phi$ which are supported on $(i,j)$ need to be nonzero to have $(\hat{\mathcal{A}}_{\rm Haar})_{i,j} = \mathcal{A}_{i,j}$. We find that a hardware friendly and efficient approximation scheme in this case is challenging. On the other hand, when using the decomposition \eqref{eq:decomposition_procedure} over the overcomplete frame \eqref{eq:component_matrix},  $(\hat{\mathcal{A}}^*)_{i,j}$ depends on 
only {\em one} $B^s_{x,y} \in \mathcal{J}$ that has the smallest support region and is supported on $(i,j)$. This makes constructing the set $\mathcal{J}$ easier and more flexible.  
\end{remark}

\section{A Practical Approximation scheme}

\label{sec:practical_approx}

Given that we now understand all 
relevant modules, we can focus on  
practical considerations.  
Notice that each $\mu_{s, x, y}^*$ requires averaging over $O(s^2)$ entries of the matrix $\mathcal{A}$, so in the worst case, we would need access to the entire matrix $\mathcal{A}$ to compute all $\mu_{s, x, y}^*$ for $B^s_{x,y} \in \mathcal{J}$. 
Nonetheless, suppose that we still compute all the 
coefficients $\alpha^s_{x,y}$ and then 
post-hoc truncate the small coefficients 
to construct the set $\mathcal{J}$. This approach will clearly 
be inefficient. In this section, we discuss two strategies where the main goal is efficiency.


\subsection{Can we approximate $\mu_{s, x, y}^*$ quickly?}

We first discuss calculating $\mu_{s, x, y}^*$. To avoid accessing the full matrix $\mathcal{A}$, instead of computing the average of exponential \eqref{eq:average_of_exp}, we compute a lower bound (due to convexity of exponential), i.e., exponential of average \eqref{eq:exp_of_average}, as an approximation. 
\begin{equation}
\mu_{s, x, y} = \exp\left(\frac{1}{s^2} \inner{B^{s}_{x, y}, \mathcal{P}}\right)
\label{eq:exp_of_average}
\end{equation}
We can verify that the expression in \eqref{eq:exp_of_average} can be computed efficiently as follows. Define $\tilde{Q}_s, \tilde{K}_s \in \mathbb{R}^{n/s \times d}$ where $\tilde{Q}_1 = Q$, $\tilde{K}_1 = K$, and
\begin{equation}
\begin{split}
(\tilde{Q}_s)_{i} &= \frac{1}{2} (\tilde{Q}_{s/2})_{2i - 1} + \frac{1}{2} (\tilde{Q}_{s/2})_{2i} \\
(\tilde{K}_s)_{i} &= \frac{1}{2} (\tilde{K}_{s/2})_{2i - 1} + \frac{1}{2} (\tilde{K}_{s/2})_{2i}
\end{split}
\label{eq:recursive_pooling}
\end{equation}
Here, $(Q_s)_i$ and $(K_s)_i$ denote the $i$-th row of the matrix $Q_s$ and $K_s$, respectively. Interestingly, \eqref{eq:exp_of_average} is simply,
\begin{equation}
\mu_{s, x, y} = \exp((\tilde{Q}_s)_{x} (\tilde{K}_s)_{y}^T)
\end{equation}
Then, the approximation using \eqref{eq:exp_of_average} is,
\begin{equation}
\begin{split}
\hat{\mathcal{A}}_{i,j} &= \mu_{s, x, y} \\
\end{split}
\label{eq:actual_approx_result}
\end{equation}
where $(s, x, y)$ is the same as \eqref{eq:approximation_result} and $\hat{\mathcal{A}}_{i,j} = 0$ otherwise. 
Each $\mu_{s,x,y}$ only requires an inner product between one row of $\tilde{Q}_s$ and one row of $\tilde{K}_s$ and applying an exponential, so the cost of a single $\mu_{s,x,y}$ is $O(1)$ when $\tilde{Q}_s$ and $\tilde{K}_s$ are provided. We will discuss the overall complexity in \S\ref{sec:complexity}. 

While efficient, this modification will incur a small amount of error. However, by using the property of $\mathcal{P}$ inherited from $Q$ and $K$, we can quantify the error. 
\begin{lemma}
\label{lem:error_bound_of_mu}
Assume for all $(i_1, j_1), (i_2, j_2) \in \supp(B^{s}_{x,y})$, $||Q_{i_1}||_p, ||Q_{i_2}||_p, ||K_{j_1}||_p, ||K_{j_2}||_p \leq \beta_1$ and $||Q_{i_1} - Q_{i_2}||_q, ||K_{j_1} - K_{j_2}||_q \leq \beta_2$ where $\frac{1}{p} + \frac{1}{q} = 1$, then $\mathcal{P}_{i,j} \in [a, a + r]$ where $r = 2 \beta_1 \beta_2$ for all $(i, j) \in \supp(B^{s}_{x,y})$ and some $a$, and 
\begin{equation*}
0 \leq \mu_{s, x, y}^* - \mu_{s, x, y} \leq C_r \mu_{s, x, y}
\end{equation*}
where $C_r = 1 + \exp(r) - 2 \exp(r/2)$. 
\end{lemma}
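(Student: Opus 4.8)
The statement bundles two claims about the entries $\mathcal{P}_{i,j}=\inner{Q_i,K_j}$ over the support of $B^{s}_{x,y}$: an interval localization $\mathcal{P}_{i,j}\in[a,a+r]$, and then the two-sided control of the Jensen gap $\mu_{s,x,y}^*-\mu_{s,x,y}$, where $\mu^*$ from \eqref{eq:average_of_exp} is the average of $\exp(\mathcal{P})$ over the $s^2$ support entries and $\mu$ from \eqref{eq:exp_of_average} is $\exp$ of that average. I would prove these in three steps: the interval bound, the lower (Jensen) bound, and the upper bound.

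\textbf{Step 1 (interval bound).} It suffices to show any two support entries differ by at most $r=2\beta_1\beta_2$, and then take $a=\min_{(i,j)}\mathcal{P}_{i,j}$. Since $\supp(B^{s}_{x,y})$ is a product index set, for $(i_1,j_1),(i_2,j_2)\in\supp(B^{s}_{x,y})$ I add and subtract $\inner{Q_{i_2},K_{j_1}}$ to split
\begin{equation*}
\mathcal{P}_{i_1,j_1}-\mathcal{P}_{i_2,j_2}=\inner{Q_{i_1}-Q_{i_2},K_{j_1}}+\inner{Q_{i_2},K_{j_1}-K_{j_2}}.
\end{equation*}
Applying H\"older's inequality with conjugate exponents $p,q$ to each term together with the hypotheses $\norm{Q_{i_2}}_p,\norm{K_{j_1}}_p\le\beta_1$ and $\norm{Q_{i_1}-Q_{i_2}}_q,\norm{K_{j_1}-K_{j_2}}_q\le\beta_2$ bounds each term by $\beta_1\beta_2$, so the difference is at most $2\beta_1\beta_2=r$.

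\textbf{Step 2 (lower bound) and Step 3 (upper bound).} The inequality $0\le\mu_{s,x,y}^*-\mu_{s,x,y}$ is just Jensen's inequality for the convex function $\exp$ (the same convexity already invoked to justify \eqref{eq:exp_of_average}): the average of the exponentials dominates the exponential of the average. For the upper bound I reduce to a scalar statement: writing $p_1,\dots,p_{s^2}$ for the values $\mathcal{P}_{i,j}\in[a,a+r]$ with mean $\bar p$, the claim is $\tfrac{1}{s^2}\sum_k e^{p_k}-e^{\bar p}\le C_r\,e^{\bar p}$, which is shift-invariant, so I center at the midpoint $c=a+r/2$ and set $t_k=p_k-c\in[-r/2,r/2]$. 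Bounding each $e^{t_k}$ by the secant line of $\exp$ on $[-r/2,r/2]$ (convexity, the correct direction for an upper bound) and averaging using $\tfrac{1}{s^2}\sum_k t_k=\bar t$ gives
\begin{equation*}
\mu_{s,x,y}^*\le e^{c}\Bigl(\cosh(r/2)+\tfrac{2\bar t}{r}\sinh(r/2)\Bigr),
\end{equation*}
which is linear in $\bar t$. It then remains to verify the one-variable inequality $\cosh(r/2)+\tfrac{2\bar t}{r}\sinh(r/2)\le(1+C_r)e^{\bar t}$ on $[-r/2,r/2]$; using $1+C_r=2+\exp(r)-2\exp(r/2)$ and the factorization $C_r=(\exp(r/2)-1)^2$, the two sides differ by $C_r e^{\pm r/2}\ge0$ at the endpoints.

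\textbf{Main obstacle.} The crux is this last scalar inequality, i.e.\ controlling the Jensen gap by exactly $C_r\,\mu$. The subtlety is that $\mu^*$ and $\mu$ share the mean $\bar t$, so bounding numerator ($\sum_k e^{t_k}\le s^2 e^{r/2}$) and denominator ($e^{\bar t}\ge e^{-r/2}$) independently pushes the estimate the wrong way and fails; one must retain the $\bar t$-dependence and check the single interior critical point of the convex difference of the two sides. In fact $C_r$ is a comfortably loose bound (the true worst-case gap ratio is strictly smaller), which is precisely why the nonnegative endpoint slack $C_r e^{\pm r/2}$ is enough to dominate the interior.
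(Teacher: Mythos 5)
Your Step 1 (the splitting $\mathcal{P}_{i_1,j_1}-\mathcal{P}_{i_2,j_2}=\inner{Q_{i_1}-Q_{i_2},K_{j_1}}+\inner{Q_{i_2},K_{j_1}-K_{j_2}}$ plus H\"older) and Step 2 (Jensen for the lower bound) are exactly the paper's argument. The issue is Step 3. Your secant-line bound $\mu^*_{s,x,y}\le e^{c}\bigl(\cosh(r/2)+\tfrac{2\bar t}{r}\sinh(r/2)\bigr)$ is fine, but you then reduce everything to the scalar inequality $\cosh(r/2)+\tfrac{2\bar t}{r}\sinh(r/2)\le(1+C_r)e^{\bar t}$ on $[-r/2,r/2]$, and your justification for it is that the difference is nonnegative at the two endpoints and that this "endpoint slack is enough to dominate the interior." That is not valid reasoning: the difference $(1+C_r)e^{\bar t}-\ell(\bar t)$ is convex, and a convex function that is positive at both endpoints can perfectly well be negative at its interior critical point (e.g.\ $e^t-10t$ on $[0,5]$). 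You correctly identify that the interior critical point must be checked, but you never actually check it; the inequality does happen to be true (the critical point lies at $\bar t^*=\ln\tfrac{2\sinh(r/2)}{r(1+C_r)}$ and a computation shows the minimum value is nonnegative, and for large $r$ the critical point even exits the interval), but as written the crux of your proof is an assertion, not an argument.

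The paper avoids this entirely by citing a known Jensen-gap estimate: for convex $f$ and $x_i\in[a,b]$,
\begin{equation}
\frac{1}{k}\sum_{i=1}^k f(x_i)-f\Bigl(\frac{1}{k}\sum_{i=1}^k x_i\Bigr)\le f(a)+f(b)-2f\Bigl(\frac{a+b}{2}\Bigr).
\end{equation}
Applied to $f=\exp$ on $[a,a+r]$ this gives $\mu^*_{s,x,y}-\mu_{s,x,y}\le e^{a}\bigl(1+e^{r}-2e^{r/2}\bigr)=e^{a}C_r$, and since the mean of the $\mathcal{P}_{i,j}$ is at least $a$ we have $e^{a}\le\mu_{s,x,y}$, which finishes the proof in one line. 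If you want to keep your route self-contained, replace the endpoint heuristic with either (i) an explicit evaluation of the convex difference at $\bar t^*$, or (ii) a short proof of the displayed Jensen-gap bound (it follows from your own secant estimate: $\ell(\bar t)-e^{\bar t}\le\max_{t\in[a,b]}(\ell(t)-e^{t})$, and one can bound that maximum by $f(a)+f(b)-2f(\tfrac{a+b}{2})$), after which the paper's $e^{a}\le\mu$ step closes the argument.
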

Lemma \ref{lem:error_bound_of_mu} suggests that the approximation error depends on the ``spread'' or numerical range $r$ of values (range, for short) in the $\mathcal{P}_{i,j}$ entries within a region $\supp(B^{s}_{x,y})$ and $\mu_{s, x, y}$. If $r$ is small or $\mu_{s, x, y}$ is small, then the approximation error is small. The range of a region is influenced by properties of $Q$ and $K$. 
The range $r$ is bounded by the norm and spread of $Q_i$ and $K_j$ for $(i, j) \in \supp(B^{s}_{x,y})$. This relies on the locality assumption that spatially nearby tokens should also be semantically similar which commonly holds in many applications. Of course, this can be avoided if needed -- it is easy to reduce the spread of $Q_i$ and $K_j$ in local regions simply by permuting the order of $Q$ and $K$. For example, we can use Locality Sensitive Hashing (LSH) to reorder $Q_i$ and $K_j$ such that similar vectors are in nearby positions, e.g., see \cite{Kitaev2020ReformerTE}. 
While the range $r$ is data/problem dependent, we can control the range by using a smaller $s$ since the range of a smaller region will be smaller. In the extreme case, when $s = 1$, the range is $0$. So, this offers guidance that when $\mu_{s, x, y}$ is large, we should approximate the region at a higher resolution such that the range is smaller.

\begin{remark}
Observe that the numerical range, which is defined as a bound on finite differences over sets of indices, is closely related to the concept of smoothness, which is defined using finite differences amongst adjacent indices.  Indeed, it is possible to adapt  Lemma \ref{lem:error_bound_of_mu} and its proof to the theory of wavelets, which are useful for characterizing signal smoothness. Please see \S\ref{sec:wavelets} for more details. 
\end{remark}

\begin{remark}
The underlying assumption of diagonal attention structure from Longformer, Big Bird, and H-Transformer-1D is that tokens are highly dependent on the nearby tokens and {\em only} the nearby tokens, which is more important than attention w.r.t. distant tokens. This might appear similar to the locality assumption discussed earlier, but this is incorrect. Our locality does {\em not} assume that semantically similar tokens must be spatially close, i.e., we allow high and precise dependence on distant tokens. 
\label{remark:locality}
\end{remark}

\subsection{Can we construct $\mathcal{J}$ quickly?}

\begin{figure*}[!htbp]
\centering
\vspace{-0.05in}
\includegraphics[width=6.5in]{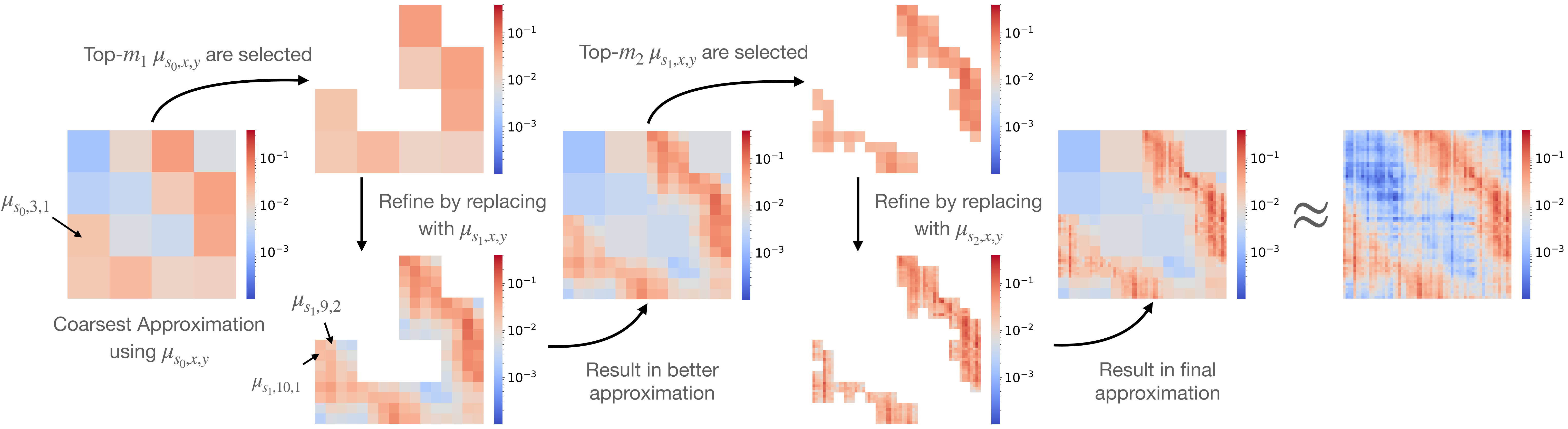}
\vspace{-0.1in}
\caption{Illustration of our approximation scheme for $R = \{16, 4, 1\}$. A log scale is used for a better visualization. A linear scale visualization is shown in \S\ref{sec:approx_procedure_linear}. }
\label{fig:multires_approx}
\vspace{-0.17in}
\end{figure*}


So far, we have assumed that the set $\mathcal{J}$ is given which is not true in practice. 
We now place a mild restriction on the set $\mathcal{J}$ as a simplification heuristic. We allow each $(i, j)$ entry of $\hat{\mathcal{A}}$ to be included in the support of exactly one $B^{s}_{x, y} \in \mathcal{J}$. Consider a $B^{s}_{x, y} \in \mathcal{J}$, if each entry of the support region of $B^{s}_{x, y}$ is included in the support of some $B^{s'}_{x', y'} \in \mathcal{J}$ with a smaller $s'$, then $B^{s}_{x,y}$ can be safely removed from $\mathcal{J}$ without affecting the approximation,  by construction. This restriction allows us to avoid searching for the $B^{s}_{x, y}$ with the smallest $s$ among multiple candidates. Then, the overall approximation can be written as
\begin{equation}
\hat{\mathcal{A}} = \sum_{B^{s}_{x,y} \in \mathcal{J}} \mu_{s,x,y} B^s_{x,y}
\label{eq:actual_approx_mat_form}
\end{equation}

\begin{remark}
Under this restriction, $\mathcal{J}$ is a subset of an orthogonal basis of $\mathbb{R}^{n \times n}$. 
\end{remark}

{\bf Mechanics of constructing $\mathcal{J}$.} Now, we can discuss how the set $\mathcal{J}$ is constructed. {Let us first consider the approximation error $\mathcal{E} = ||\hat{\mathcal{A}} - \mathcal{A}||_F^2$, by factoring out $\mu_{s, x, y}^2$, 
\begin{equation}
\begin{split}
\mathcal{E} &= \sum_{B^{s}_{x,y} \in \mathcal{J}} \mu_{s, x, y}^2  \sum_{(i,j) \in \supp(B^{s}_{x,y})} \left(\frac{\mathcal{A}_{i,j}}{\mu_{s, x, y}} - 1\right)^2
\end{split}
\label{eq:approx_error}
\end{equation}}
Since the goal is to minimize the error $\mathcal{E}$, the optimal solution is to fix the computation budget $|\mathcal{J}|$ and solve the optimization problem which minimizes $\mathcal{E}$ over all possible $\mathcal{J}$. However, this might not be efficiently solvable. 

Instead, we consider finding a good solution greedily. Consider the error $\mathcal{E}$. We can analyze the specific term to get an insight into how approximation error can be reduced. Note that
\begin{equation}
\log \frac{\mathcal{A}_{i,j}}{\mu_{s, x, y}} = \mathcal{P}_{i,j} - \frac{1}{s^2} \inner{B^{s}_{x,y}, \mathcal{P}}
\end{equation}
is the deviation of $\mathcal{P}_{i,j}$ from the mean of the support region, so the approximation error \eqref{eq:approx_error} is determined by $\mu_{s, x, y}$ and the deviation of $\mathcal{P}_{i,j}$ within the region, which coincides with the conclusion of Lemma \ref{lem:error_bound_of_mu}. Computing this deviation would incur a $O(s^2)$ cost, so we avoid using it as a criteria for $\mathcal{J}$ construction. We found that we can make a reasonable assumption that the deviation of $\mathcal{P}_{i,j}$ in a support of $B^{s}_{x,y}$ for the {\em same} $s$ are similar, and the 
deviation of a region for a smaller $s$ is smaller. Then, a sensible heuristic is to use $\mu_{s, x, y}$ as a criteria such that if $\mu_{s, x, y}$ is large, then we must approximate the region using a higher resolution. The approximation procedure is described in Alg. \ref{alg:algorithm}, and the approximation result is shown in Fig. \ref{fig:multires_approx}. Broadly, this approximation starts with a coarse approximation of a self-attention matrix, then for regions with a large $\mu_{s,x,y}$, we successively refines the regions to a higher resolution.

\setlength{\textfloatsep}{10pt}
\begin{algorithm}[tb]
   \caption{Constructing the set $\mathcal{J}$}
   \label{alg:algorithm}
   {\small
\begin{algorithmic}
   \STATE {\bfseries Input:} $R = \{s_0, s_1, ..., s_k\}$ in descending order
   \STATE {\bfseries Input:} Budget $m_i$ for each $s_i$ for $i > 0$
   \STATE {\bfseries Input:} Initial $\mathcal{J}$ (empty or prespecified via priors)
   \STATE Compute $\mu_{s_0, x, y}$ for all possible $x, y$ and add $B^{s_0}_{x,y}$ to $\mathcal{J}$
   \FOR{$i = 1$ {\bfseries to} $i = k$}
   \STATE Pop $m_i$ elements $B^{s_{i-1}}_{x,y}$ with the largest $\mu_{s_{i-1}, x, y}$
   \FOR{each $B^{s_{i-1}}_{x,y}$}
   \STATE Compute $\mu_{s_{i}, x', y'}$ for all $\supp(B^{s_i}_{x',y'}) \subseteq \supp(B^{s_{i-1}}_{x,y})$
   \STATE Add $B^{s_i}_{x',y'}$ to $\mathcal{J}$
   \ENDFOR
   \ENDFOR
  \STATE {\bfseries Output:} $\mathcal{J}$
\end{algorithmic}}
\end{algorithm}

With the approximation procedure in place, we can quantify the error of this multiresolution approximation. We only show the approximation error for $R = \{b, 1\}$ for some $b$, but the analysis easily extends beyond $R = \{b, 1\}$. 
\begin{proposition}
\label{prop:error_bound_of_attn}
Let $R = \{b, 1\}$ for some $b$ and $\delta$ be the $m_1$-th largest $\mu_{b, x, y}$, assume for all $(i,j) \in \supp(B^{b}_{x,y})$, $\mathcal{P}_{i,j} \in [a, a + r]$ for some $a$ and $r > 0$, then 
\begin{equation*}
\begin{split}
\frac{||\hat{\mathcal{A}} - \mathcal{A}||_F}{||\mathcal{A}||_F} &\leq \sqrt{\frac{(n^2 - m_1 b^2) C_{2r} \delta^2 }{ \sum_{i,j=1}^n \exp(2 \mathcal{P}_{i,j})}}
\end{split}
\end{equation*}
where $C_{2r} = 1 + \exp(2r) - 2 \exp(r)$
\end{proposition}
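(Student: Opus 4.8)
The plan is to exploit the explicit error decomposition in \eqref{eq:approx_error} together with the structure that Algorithm~\ref{alg:algorithm} imposes on $\mathcal{J}$ when $R=\{b,1\}$. First I would pin down what $\mathcal{J}$ looks like in this two-level case: the algorithm computes every $\mu_{b,x,y}$, refines the top $m_1$ coarse blocks down to scale $1$, and retains the remaining coarse blocks at scale $b$. Since a scale-$1$ component is supported on a single entry, $\mu_{1,i,j} = \exp(\mathcal{P}_{i,j}) = \mathcal{A}_{i,j}$, so every refined entry is reconstructed exactly and contributes nothing to the error. Hence in \eqref{eq:approx_error} all the error comes from the retained coarse blocks, of which there are $(n/b)^2 - m_1$, each covering $b^2$ entries, for a total of $n^2 - m_1 b^2$ approximated entries.

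Next I would bound the per-block term. Writing $d_{i,j} = \mathcal{P}_{i,j} - \tfrac{1}{b^2}\inner{B^b_{x,y},\mathcal{P}}$, we have $\mathcal{A}_{i,j}/\mu_{b,x,y} = \exp(d_{i,j})$, and the hypothesis $\mathcal{P}_{i,j}\in[a,a+r]$ forces the block average into the same interval, so $d_{i,j}\in[-r,r]$. The key algebraic observation is that $C_{2r} = 1+\exp(2r)-2\exp(r) = (\exp(r)-1)^2$, and that $g(d)=(\exp(d)-1)^2$ is maximized over $[-r,r]$ at $d=r$ (since $\exp(r)-1 > 1-\exp(-r)$ for $r>0$, equivalently $\exp(r)+\exp(-r)\ge 2$). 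Thus each inner summand $(\mathcal{A}_{i,j}/\mu_{b,x,y}-1)^2 \le C_{2r}$, giving a per-block contribution of at most $b^2 C_{2r}\,\mu_{b,x,y}^2$.

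Then I would aggregate. Because $\delta$ is the $m_1$-th largest $\mu_{b,x,y}$ and every retained coarse block sits outside the top $m_1$, we have $\mu_{b,x,y}\le\delta$ on those blocks, so summing over the $(n/b)^2 - m_1$ retained blocks yields $\|\hat{\mathcal{A}}-\mathcal{A}\|_F^2 \le b^2 C_{2r}\,\big((n/b)^2 - m_1\big)\delta^2 = (n^2 - m_1 b^2)\,C_{2r}\,\delta^2$. Finally, since $\mathcal{A}=\exp(\mathcal{P})$ entrywise, $\|\mathcal{A}\|_F^2 = \sum_{i,j}\exp(2\mathcal{P}_{i,j})$; dividing and taking square roots gives exactly the claimed bound.

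I expect the only real subtlety — rather than a genuine obstacle — to be the bookkeeping that the refined scale-$1$ blocks are exact and therefore drop out of \eqref{eq:approx_error}, together with correctly identifying $C_{2r}=(\exp(r)-1)^2$ as the uniform bound on the relative deviation $(\exp(d_{i,j})-1)^2$. Once these two facts are in place, the rest is a direct application of the decomposition \eqref{eq:approx_error} and a count of the retained coarse blocks. The argument is stated for $R=\{b,1\}$, but the same per-block bound together with a telescoping count over successive scales is what lets the analysis extend beyond the two-level case.
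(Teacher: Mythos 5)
Your proof is correct and arrives at the same final bound, but the key per-block estimate is obtained by a genuinely different route. The paper bounds $\sum_{(i,j)}(\mu_{b,x,y}-\exp(\mathcal{P}_{i,j}))^2$ by expanding the square into $b^2\mu_{b,x,y}^2 + b^2\mu'_{b,x,y} - 2b^2\mu_{b,x,y}\mu^*_{b,x,y}$, using first-moment Jensen ($\mu^*_{b,x,y}\ge\mu_{b,x,y}$) to drop the cross term, and then invoking the Jensen-gap inequality of the cited reference on the convex function $\exp(2\cdot)$ over $[2a,2a+2r]$ to get $\mu'_{b,x,y}-\mu_{b,x,y}^2\le C_{2r}\mu_{b,x,y}^2$; this deliberately mirrors the proof of Lemma \ref{lem:error_bound_of_mu} and reuses the same machinery. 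You instead bound each summand pointwise: writing $\mathcal{A}_{i,j}/\mu_{b,x,y}=\exp(d_{i,j})$ with $d_{i,j}\in[-r,r]$ (since the block average of $\mathcal{P}$ lies in the same interval as its entries), and observing the identity $C_{2r}=1+\exp(2r)-2\exp(r)=(\exp(r)-1)^2$ together with the fact that $(\exp(d)-1)^2$ attains its maximum on $[-r,r]$ at $d=r$. Your route is more elementary --- it needs no external Jensen-gap lemma and makes transparent why $C_{2r}$ is exactly the right constant --- while the paper's route keeps the two results (Lemma \ref{lem:error_bound_of_mu} and Proposition \ref{prop:error_bound_of_attn}) structurally parallel and also establishes the intermediate bound on $\mu'_{b,x,y}$, which is reused in the sparse-plus-low-rank discussion of \S\ref{sec:sparse_low_rank_link}. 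The bookkeeping in your first and third paragraphs (exactness of the scale-$1$ entries, the count $(n/b)^2-m_1$ of retained coarse blocks, and $\mu_{b,x,y}\le\delta$ on those blocks) matches the paper's.
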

Proposition \ref{prop:error_bound_of_attn} again emphasizes the relation between the numerical range $r$ of $\mathcal{P}$ and the quality of an approximation. With some knowledge of the range $r$ and $\mu_{b, x, y}$, we can control the error using an appropriate budget $m_1$. 

\begin{remark}
The procedure shares some commonalities with the correction component of Geometric Multigrid methods \cite{saad_2003, multigrid}. Coarsening is similar to our low resolution approximation, but the prolongation step is different. Rather than interpolate the entire coarse grid to finer grids, our method replaces some regions of coarse grid with its higher resolution approximation. 
\end{remark}

\subsection{How do we compute $\hat{\mathcal{A}} V$?}

We obtained an approximation $\hat{\mathcal{A}}$, but we should not instantiate this matrix (to avoid the $O(n^2)$ cost). So, we discuss a simple procedure for computing $\hat{\mathcal{A}} V$ without constructing the $n \times n$ matrix. 
Define $\tilde{V}_s \in \mathbb{R}^{n/s \times d}$ where $\tilde{V}_1 = V$ and
\begin{equation}
\begin{split}
(\tilde{V}_s)_{i} &= \frac{1}{2} (\tilde{V}_{s/2})_{2i - 1} + \frac{1}{2} (\tilde{V}_{s/2})_{2i}
\end{split}
\label{eq:recursive_pooling_V}
\end{equation}
similar to \eqref{eq:recursive_pooling}. Then, the steps follow Alg. \ref{alg:algorithm_AV}. We again start with multiplying coarse components of $\hat{\mathcal{A}}$ with $V$, then successively add the multiplication of higher resolution components of $\hat{\mathcal{A}}$ and $V$, and finally compute $\hat{\mathcal{A}} V$.

\setlength{\textfloatsep}{10pt}
\begin{algorithm}[tb]
   \caption{Computing $\hat{\mathcal{A}} V$}
   \label{alg:algorithm_AV}
   {\small
\begin{algorithmic}
    \STATE {\bfseries Input:} $R = \{s_0, s_1, ..., s_k\}$ in descending order
   \STATE {\bfseries Input:} $\mathcal{J}, \hat{\mathcal{A}}, \tilde{V}_s$
   \STATE Initialize $Y_{s_{-1}} \in \R^{1 \times d}$ to be zero matrix
   \FOR{$i = 0$ {\bfseries to} $i = k$}
   \STATE Duplicate rows of $Y_{s_{i-1}}$ to create $Y_{s_i} \in \R^{n/s_i \times d}$
   \FOR{each $B^{s_i}_{x,y} \in \mathcal{J}$}
   \STATE $(Y_{s_i})_x = (Y_{s_i})_x + \mu_{s_i,x,y} (\tilde{V}_{s_i})_y$
   \ENDFOR
   \ENDFOR
  \STATE {\bfseries Output:} $Y_{s_k} = \hat{\mathcal{A}} V$
\end{algorithmic}}
\end{algorithm}

\subsection{What is the overall complexity?}
\label{sec:complexity}

We have now described the overall procedure of our approximation approach. In this section, we analyze the complexity of our procedure. Following convention in efficient self-attention papers, we treat $d$ as a constant and it does not influence the complexity. 

We first need to compute $\tilde{Q}_s, \tilde{K}_s, \tilde{V}_s$ for $s \in \{2, 4, \cdots, n\}$. Since each row of $\tilde{Q}_s, \tilde{K}_s, \tilde{V}_s \in \mathbb{R}^{n/s \times d}$ requires averaging over two rows from $\tilde{Q}_{s/2}, \tilde{K}_{s/2}, \tilde{V}_{s/2}$, the total cost of computing $\tilde{Q}_s, \tilde{K}_s, \tilde{V}_s$ for all $s$ is simply $O(\frac{n}{2} + \frac{n}{4} + \cdots + \frac{n}{n}) = O(n)$. 

Given all $\tilde{Q}_s, \tilde{K}_s$, in Alg. \ref{alg:algorithm}, there are $O((n/s_0)^2)$ possible entries of $\mu_{s_0, x, y}$. And at scale $s_i$ for $i > 0$, there are $O(m_i (s_{i-1}/s_{i})^2)$ entries of $\mu_{s_i, x, y}$ since there are $O((s_{i-1}/s_{i})^2)$ number of of $B^{s_i}_{x',y'}$ satisfying $\supp(B^{s_i}_{x',y'}) \subseteq \supp(B^{s_{i-1}}_{x,y})$ and there are $m_i$ regions at scale $s_{i-1}$ to be refined. Note that computing each $\mu_{s, x, y}$ takes $O(1)$, and selecting top-k elements is linear in the input size. Therefore, the cost of constructing $\mathcal{J}$ is $O((n/s_0)^2 + \sum_{i=1}^k m_i (s_{i-1}/s_{i})^2)$. Once $\mathcal{J}$ is constructed, $\hat{\mathcal{A}}$ is simple since $\hat{\mathcal{A}}_{i,j} = \mu_{s,x,y}$ for a $B^{s}_{x,y} \in \mathcal{J}$. 

Finally, multiplying $\hat{A}$ and $V$ in Alg. \ref{alg:algorithm_AV} takes $O(n + (n/s_0)^2 + \sum_{i=1}^k m_i (s_{i-1}/s_{i})^2)$, also. The cost of creating a $Y_{s_i}$ is $O(n/s_i)$, so the cost of creating all $Y_{s}$ for $s \in \{s_0, \cdots, s_k\}$ is $O(\frac{n}{s_0} + \cdots + \frac{n}{s_k}) = O(n)$. Then, for each $B^{s}_{x,y} \in \mathcal{J}$, adding $\mu_{s,x,y} (\hat{V}_{s})_y$ to $(Y_{s})_x$ takes $O(1)$. The size of $\mathcal{J}$ is $O((n/s_0)^2 + \sum_{i=1}^k m_i (s_{i-1}/s_{i})^2)$, so the total complexity of Alg. \ref{alg:algorithm_AV} is as stated. 

Therefore, the total complexity of our approach is $O(n + (n/s_0)^2 + \sum_{i=1}^k m_i (s_{i-1}/s_{i})^2)$. For example, when $R = \{\sqrt{n}, 1\}$, the complexity becomes $O(m_1 n)$. The parameter $m_1$ adjusts the trade-off between approximation accuracy and runtime similar to other efficient methods,  e.g., window size $w$ in $O(wn)$ for Longformer \cite{beltagy2020longformer} and projection size $p$ in $O(pn)$ for Linformer \cite{Wang2020LinformerSW} or Performer \cite{choromanski2020rethinking}. 

\section{Experiments}

We perform a broad set of experiments to evaluate the practical performance profile of our MRA-based self-attention module. {\bf First}, we compare our approximation accuracy with several other baselines. Then, we evaluate 
our method on the RoBERTa language model pretraining \cite{liu2019roberta} and downstream tasks on both short and long sequences. Finally, as is
commonly reported in most evaluations of efficient self-attention methods, we discuss our evaluations on the Long Range Arena (LRA) benchmark \cite{tay2020long}. All hyperparameters are reported 
in \S\ref{sec:appendix_exp}. 

{\bf Overview.} Since the efficiency is a core focus of efficient self-attention methods, time and memory efficiency is taken into account when evaluating performance. Whenever possible, we include runtime and memory consumption of a single instance for each method alongside the accuracy it achieves (in each table). Since the models are exactly the same 
(except which self-attention module is used), we only profile the efficiency of one training step consumed by these modules. See \S\ref{sec:appendix_exp} for more details on profiling. 

{\bf Baselines.} For a rigorous comparison, we use an extensive list of baselines, including Linformer \cite{Wang2020LinformerSW}, Performer \cite{choromanski2020rethinking}, Nystr\"{o}mformer \cite{xiong2021nystromformer}, SOFT \cite{SOFT}, YOSO \cite{yoso}, Reformer \cite{Kitaev2020ReformerTE}, Longformer \cite{beltagy2020longformer}, Big Bird \cite{zaheer2020big}, H-Transformer-1D \cite{htransformer1d}, and Scatterbrain \cite{chen2021scatterbrain}. Since Nystromformer, SOFT, and YOSO also have a variant which involves convolution, we perform evaluations for both cases. We use our multiresolution approximation with $R = \{32, 1\}$ for our method denoted in experiments as MRA-2. Further, we found that in tasks with limited dataset sizes, sparsity provides a regularization 
towards better performance. So, we include a MRA-2-s, which only computes 
\begin{equation}
\hat{\mathcal{A}}_1 = \sum_{B^1_{x, y} \in \mathcal{J}} \mu_{1,x,y} B^1_{x,y}
\end{equation}
after finding $\mathcal{J}$. We use different method-specific hyperparameters for some methods to better understand their efficiency-performance trade off. \textbf{Takeaway}: These detailed comparisons suggest that our MRA-based self-attention offers top performance and top efficiency among the baselines.

\subsection{How good is the approximation accuracy?}

\begin{figure}[!htbp]
\centering
\vspace{-0.05in}
\includegraphics[width=3.25in]{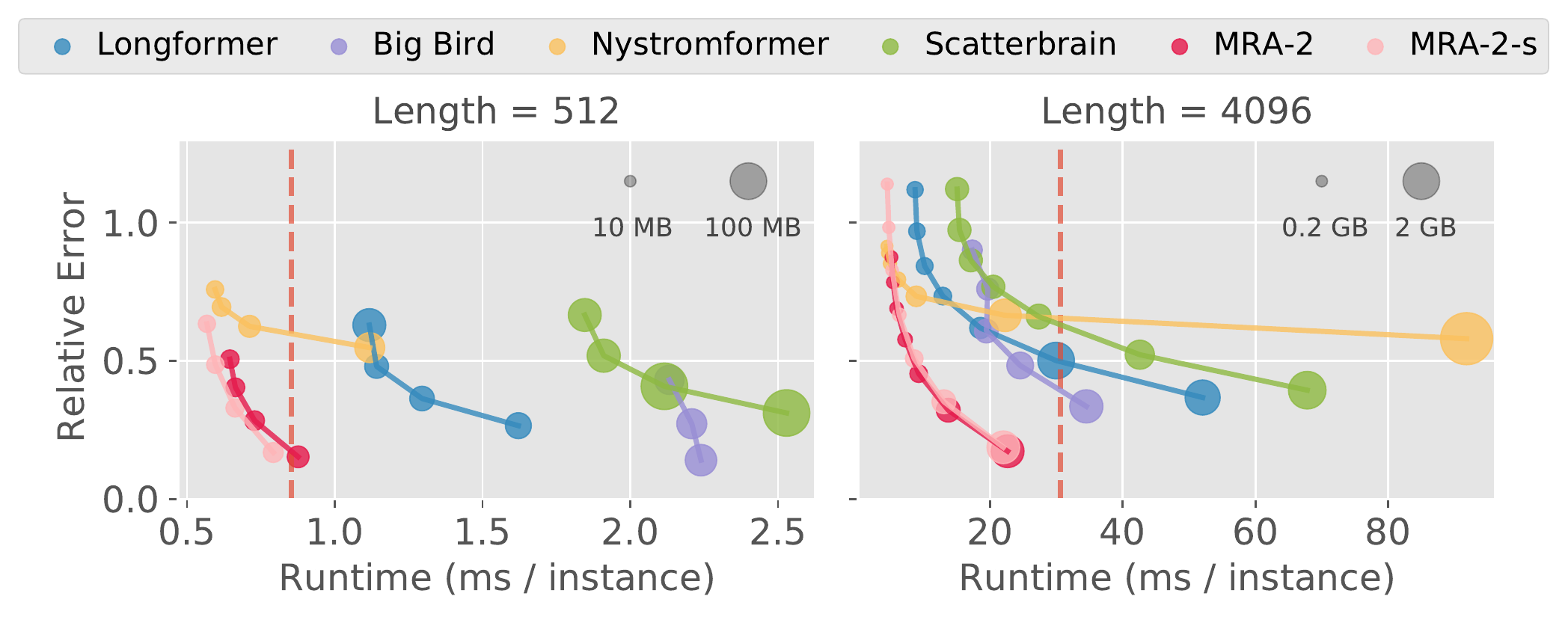}
\vspace{-0.3in}
\caption{Approximation Error vs Runtime vs Memory. 
Red/dotted vertical line is the runtime of standard self-attention. Note that for any points to the right of the red vertical line, the approximation is slower than computing the true self-attention. }
\label{fig:runtime_vs_accuracy}
\vspace{-0.05in}
\end{figure}

We show that our method give the best trade-off between approximation accuracy and efficiency by a significant margin compared to other baselines. 
The approximation accuracy of each method,  compared to the standard self-attention, provides us a direct indication of the performance of approximation methods. To evaluate accuracy, we use $512$ and $4096$ length $Q$, $K$, and $V$ from a pretrained model and compute the relative error
$||\hat{D} \hat{\mathcal{A}} V - D \mathcal{A} V||_F / ||D \mathcal{A} V||_F$. 
As shown in Fig. \ref{fig:runtime_vs_accuracy}, our MRA-2(-s) has the lowest approximation error while maintaining the fastest runtime and smallest memory consumption by a large margin compared to other baselines in both short and long sequences. See \S\ref{sec:appendix_exp} for more details, sequence lengths, and baselines.

\begin{figure}[!bt]
\centering
\includegraphics[width=3.25in]{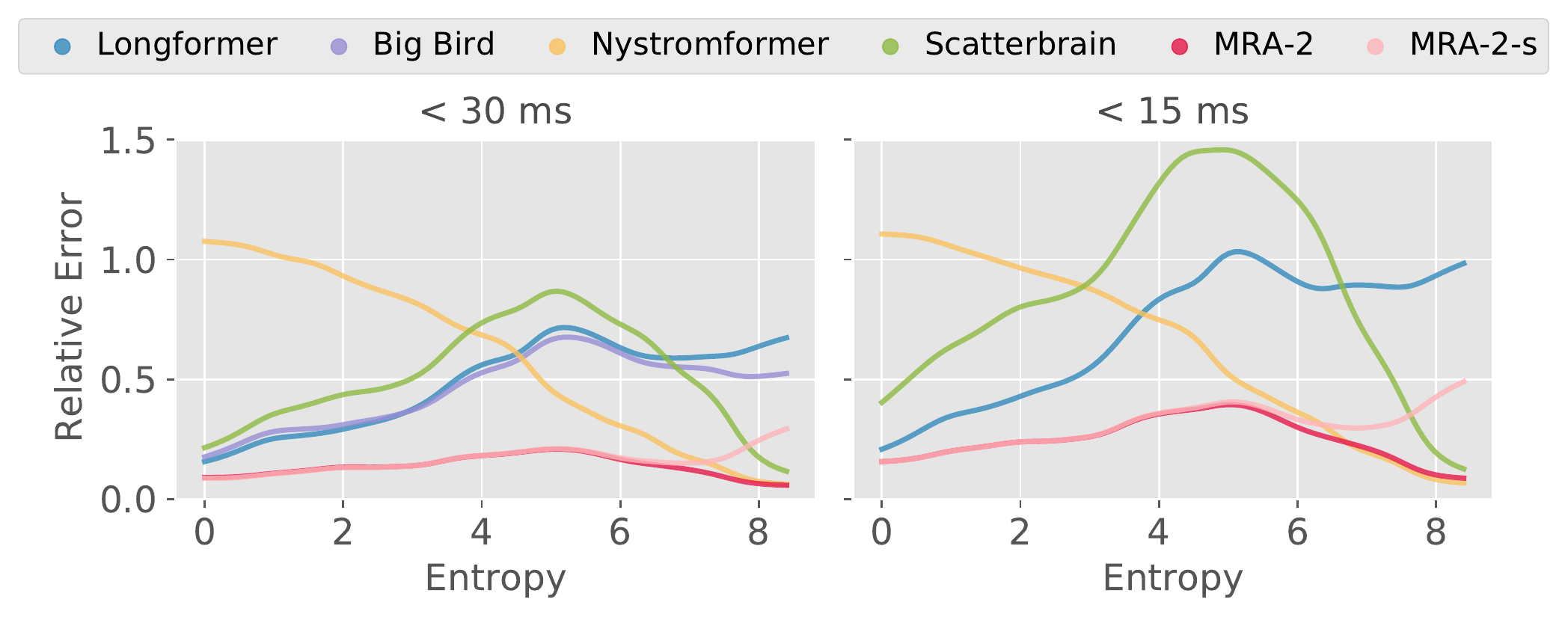}
\vspace{-0.3in}
\caption{Entropy vs Approximation Error plots.  Hyperparameters of each method is set such that the runtime is $<30ms$ and $<15ms$, resp. The fastest setting of Big Bird is still $>15ms$, so omitted from the second plot. Note that the runtime of standard self-attention is roughly 30ms. }
\label{fig:entropy_vs_accuracy}
\vspace{-0.05in}
\end{figure}

Next, we evaluate the effect of the spread (or entropy) of self-attention on the approximation for different methods. The result is shown in Fig. \ref{fig:entropy_vs_accuracy}. We see one limitation of low rank or sparsity-based schemes (discussed in \S\ref{sec:sparse_low_rank_link} and \citet{chen2021scatterbrain}). Our MRA-2 performs well across attention instances with different entropy settings and significantly better than Scatterbrain \cite{chen2021scatterbrain}.

\subsection{RoBERTa Language Modeling}

\begin{table}[!htbp]
\begin{center}
\begin{scriptsize}
\setlength{\tabcolsep}{5pt}
\begin{tabular}{lcccccc}
\toprule
Method & Time & Mem & \multicolumn{2}{c}{MLM} & \multicolumn{2}{c}{MNLI}  \\
 & \tiny{ms} & \tiny{MB} & \tiny{Before} & \tiny{After} & \tiny{m} & \tiny{mm} \\
\midrule
Transformer  &  {\textbf{0.86}}  &  {\transparent{0.3}{71.0}}  &  {\textbf{73.1}}  &  {\textbf{74.0}}  &  {\textbf{87.4}}  &  {\textbf{87.3}} \\
\midrule
Performer  &  {\transparent{0.3}{1.29}}  &  {\transparent{0.3}{62.8}}  &  {\transparent{0.8}{6.8}}  &  {\transparent{0.3}{63.1}}  &  {\transparent{0.3}{32.7}}  &  {\transparent{0.3}{33.0}} \\
\midrule
Linformer  &  {\textbf{0.74}}  &  {\transparent{0.3}{54.5}}  &  {\transparent{0.3}{1.0}}  &  {\transparent{0.3}{5.6}}  &  {\transparent{0.3}{35.4}}  &  {\transparent{0.3}{35.2}} \\
\midrule
SOFT  &  {\textbf{0.86}}  &  {\transparent{0.8}{34.0}}  &  {\transparent{0.8}{10.9}}  &  {\transparent{0.3}{25.0}}  &  {\transparent{0.3}{32.7}}  &  {\transparent{0.3}{33.0}} \\
SOFT + Conv  &  {\transparent{0.8}{1.02}}  &  {\transparent{0.8}{35.5}}  &  {\transparent{0.3}{1.0}}  &  {\transparent{0.3}{65.5}}  &  {\transparent{0.8}{74.9}}  &  {\transparent{0.8}{75.0}} \\
\midrule
Nystromformer  &  {\textbf{0.71}}  &  {\transparent{0.8}{34.8}}  &  {\transparent{0.8}{17.2}}  &  {\transparent{0.8}{68.2}}  &  {\transparent{0.3}{35.4}}  &  {\transparent{0.3}{35.2}} \\
Nystrom + Conv  &  {\transparent{0.8}{0.88}}  &  {\transparent{0.8}{37.2}}  &  {\transparent{0.3}{1.4}}  &  {\transparent{0.8}{70.9}}  &  {\transparent{0.8}{85.1}}  &  {\transparent{0.8}{84.6}} \\
\midrule
YOSO  &  {\transparent{0.8}{0.97}}  &  {\textbf{29.8}}  &  {\transparent{0.8}{13.0}}  &  {\transparent{0.8}{68.4}}  &  {\transparent{0.3}{35.4}}  &  {\transparent{0.3}{35.2}} \\
YOSO + Conv  &  {\transparent{0.3}{1.20}}  &  {\textbf{32.9}}  &  {\transparent{0.3}{3.0}}  &  {\transparent{0.8}{69.0}}  &  {\transparent{0.8}{83.2}}  &  {\transparent{0.8}{83.1}} \\
\midrule
Reformer  &  {\transparent{0.3}{1.23}}  &  {\transparent{0.3}{59.4}}  &  {\transparent{0.3}{0.7}}  &  {\transparent{0.8}{69.5}}  &  {\transparent{0.8}{84.9}}  &  {\transparent{0.8}{85.0}} \\
\midrule
Longformer  &  {\transparent{0.3}{1.30}}  &  {\transparent{0.8}{43.3}}  &  {\transparent{0.8}{66.0}}  &  {\transparent{0.8}{71.2}}  &  {\transparent{0.8}{85.6}}  &  {\transparent{0.8}{85.4}} \\
  &  {\transparent{0.3}{2.31}}  &  {\transparent{0.3}{62.5}}  &  {\textbf{71.9}}  &  {\textbf{73.2}}  &  {\textbf{87.0}}  &  {\textbf{87.1}} \\
\midrule
Big Bird  &  {\transparent{0.3}{2.03}}  &  {\transparent{0.3}{63.9}}  &  {\textbf{71.6}}  &  {\textbf{73.3}}  &  {\textbf{87.1}}  &  {\textbf{87.0}} \\
\midrule
H-Transformer-1D  &  {\transparent{0.8}{0.97}}  &  {\textbf{29.3}}  &  {\transparent{0.3}{0.5}}  &  {\transparent{0.3}{6.1}}  &  {\transparent{0.3}{35.4}}  &  {\transparent{0.3}{35.2}} \\
\midrule
Scatterbrain  &  {\transparent{0.3}{2.23}}  &  {\transparent{0.3}{78.7}}  &  {\transparent{0.8}{60.6}}  &  {\transparent{0.3}{ - }}  &  {\transparent{0.3}{ - }}  &  {\transparent{0.3}{ - }} \\
\midrule
MRA-2  &  {\textbf{0.73}}  &  {\textbf{28.1}}  &  {\textbf{68.9}}  &  {\textbf{73.1}}  &  {\textbf{86.8}}  &  {\textbf{87.1}} \\
  &  {\transparent{0.8}{0.86}}  &  {\transparent{0.8}{34.3}}  &  {\textbf{71.9}}  &  {\textbf{73.8}}  &  {\textbf{87.1}}  &  {\textbf{87.2}} \\
MRA-2-s  &  {\textbf{0.66}}  &  {\textbf{23.8}}  &  {\textbf{67.2}}  &  {\transparent{0.8}{72.8}}  &  {\textbf{87.0}}  &  {\textbf{87.0}} \\
  &  {\textbf{0.80}}  &  {\textbf{29.1}}  &  {\textbf{71.8}}  &  {\textbf{73.8}}  &  {\textbf{87.4}}  &  {\textbf{87.4}} \\
\bottomrule
\end{tabular}
\end{scriptsize}
\end{center}
\vspace{-0.1in}
\caption{Summary of 512 length RoBERTa-base models:  runtime and memory efficiency, MLM accuracy, and MNLI accuracy. Unit for time (and memory) is ms (and MB). Before/After denotes accuracy before/after finetuning. The m/mm give the matched/mismatched MNLI. Some methods have more than one row for different model-specific hyperparameters. We divide measurements into three ranked groups for visualization (bold, normal, transparent). 
}
\label{tab:roberta-base-512}
\vspace{-0.05in}
\end{table}

Here, we use RoBERTa language modeling \cite{liu2019roberta} to assess the performance and efficiency trade off of our method and baselines. 
We use a pretrained RoBERTa-base to evaluate the compatibility of each method with the existing Transformer models and overall feasibility for direct deployment. For fair comparisons, we also check the performance of models trained from scratch. Then, MNLI \cite{williams2018broad} is used to test the model's ability on downstream tasks. 
Further, we extend the 512 length models to 4096 length for a set of best performing methods and use the WikiHop \cite{wikihop} task as an assessment on long sequence language models. 


\textbf{Standard Sequence Length}. Since efficient self-attention approximates standard self-attention, we could simply substitute the standard self-attention of a trained model. This would allow us to minimize the training cost for new methods. To evaluate compatibility with the existing models, we use a pretrained 512 length RoBERTa-base model \cite{liu2019roberta} and replace its self-attention module with efficient alternatives and measure the validation Masked Language Modeling (MLM) accuracy. Then, we check accuracy after finetuning the model on English Wikipedia and Bookcorpus \cite{zhu2015aligning}. Eventually, we finetune the model on the downstream task MNLI \cite{williams2018broad}.

\begin{table}[!htbp]
\begin{center}
\begin{scriptsize}
\setlength{\tabcolsep}{5pt}
\begin{tabular}{llccccc}
\toprule
Method & Time & Mem & MLM & \multicolumn{2}{c}{MNLI}  \\
& \tiny{ms} & \tiny{MB} &  & \tiny{m} & \tiny{mm} \\
\midrule
Transformer  &  {\textbf{0.41}}  &  {\transparent{0.3}{35.47}}  &  {\textbf{57.0}}  &  {\transparent{0.8}{72.7{\fontsize{5}{5}\selectfont $\pm$0.6}}}  &  {\transparent{0.8}{73.8{\fontsize{5}{5}\selectfont $\pm$0.2}}} \\
\midrule
Performer  &  {\transparent{0.3}{0.63}}  &  {\transparent{0.3}{31.38}}  &  {\transparent{0.3}{48.6}}  &  {\transparent{0.3}{69.8{\fontsize{5}{5}\selectfont $\pm$0.4}}}  &  {\transparent{0.3}{70.5{\fontsize{5}{5}\selectfont $\pm$0.1}}} \\
\midrule
Linformer  &  {\textbf{0.35}}  &  {\transparent{0.3}{27.23}}  &  {\transparent{0.8}{53.5}}  &  {\transparent{0.8}{72.5{\fontsize{5}{5}\selectfont $\pm$0.8}}}  &  {\transparent{0.8}{73.2{\fontsize{5}{5}\selectfont $\pm$0.4}}} \\
\midrule
SOFT  &  {\transparent{0.8}{0.43}}  &  {\transparent{0.8}{17.02}}  &  {\transparent{0.3}{42.8}}  &  {\transparent{0.3}{63.8{\fontsize{5}{5}\selectfont $\pm$2.2}}}  &  {\transparent{0.3}{64.7{\fontsize{5}{5}\selectfont $\pm$2.6}}} \\
SOFT + Conv  &  {\transparent{0.8}{0.53}}  &  {\transparent{0.8}{17.77}}  &  {\transparent{0.8}{56.7}}  &  {\transparent{0.3}{70.8{\fontsize{5}{5}\selectfont $\pm$0.5}}}  &  {\transparent{0.3}{71.8{\fontsize{5}{5}\selectfont $\pm$0.4}}} \\
\midrule
Nystromformer  &  {\textbf{0.34}}  &  {\transparent{0.8}{17.40}}  &  {\transparent{0.3}{53.1}}  &  {\transparent{0.3}{71.4{\fontsize{5}{5}\selectfont $\pm$0.6}}}  &  {\transparent{0.3}{72.0{\fontsize{5}{5}\selectfont $\pm$0.3}}} \\
Nystrom + Conv  &  {\transparent{0.8}{0.45}}  &  {\transparent{0.8}{18.60}}  &  {\textbf{57.3}}  &  {\transparent{0.8}{73.0{\fontsize{5}{5}\selectfont $\pm$0.4}}}  &  {\transparent{0.8}{73.9{\fontsize{5}{5}\selectfont $\pm$0.6}}} \\
\midrule
YOSO  &  {\transparent{0.8}{0.47}}  &  {\textbf{14.91}}  &  {\transparent{0.3}{53.4}}  &  {\transparent{0.8}{72.9{\fontsize{5}{5}\selectfont $\pm$0.8}}}  &  {\transparent{0.8}{73.2{\fontsize{5}{5}\selectfont $\pm$0.4}}} \\
YOSO + Conv  &  {\transparent{0.3}{0.58}}  &  {\textbf{16.42}}  &  {\textbf{57.2}}  &  {\transparent{0.8}{72.5{\fontsize{5}{5}\selectfont $\pm$0.4}}}  &  {\transparent{0.3}{72.9{\fontsize{5}{5}\selectfont $\pm$0.5}}} \\
\midrule
Reformer  &  {\textbf{0.39}}  &  {\textbf{16.43}}  &  {\transparent{0.3}{52.4}}  &  {\textbf{73.7{\fontsize{5}{5}\selectfont $\pm$0.4}}}  &  {\textbf{74.6{\fontsize{5}{5}\selectfont $\pm$0.3}}} \\
  &  {\transparent{0.3}{0.61}}  &  {\transparent{0.3}{29.65}}  &  {\transparent{0.8}{55.6}}  &  {\textbf{75.0{\fontsize{5}{5}\selectfont $\pm$0.2}}}  &  {\textbf{75.6{\fontsize{5}{5}\selectfont $\pm$0.3}}} \\
\midrule
Longformer  &  {\transparent{0.3}{0.61}}  &  {\transparent{0.8}{21.60}}  &  {\transparent{0.8}{54.7}}  &  {\transparent{0.3}{72.0{\fontsize{5}{5}\selectfont $\pm$0.4}}}  &  {\transparent{0.8}{73.5{\fontsize{5}{5}\selectfont $\pm$0.2}}} \\
  &  {\transparent{0.3}{1.10}}  &  {\transparent{0.3}{31.44}}  &  {\textbf{57.4}}  &  {\textbf{75.8{\fontsize{5}{5}\selectfont $\pm$0.5}}}  &  {\textbf{76.7{\fontsize{5}{5}\selectfont $\pm$0.6}}} \\
\midrule
Big Bird  &  {\transparent{0.3}{1.02}}  &  {\transparent{0.3}{31.91}}  &  {\textbf{57.6}}  &  {\textbf{75.0{\fontsize{5}{5}\selectfont $\pm$0.5}}}  &  {\textbf{75.6{\fontsize{5}{5}\selectfont $\pm$0.6}}} \\
\midrule
H-Transformer-1D  &  {\transparent{0.8}{0.47}}  &  {\textbf{14.65}}  &  {\transparent{0.3}{43.7}}  &  {\transparent{0.3}{62.9{\fontsize{5}{5}\selectfont $\pm$2.7}}}  &  {\transparent{0.3}{63.4{\fontsize{5}{5}\selectfont $\pm$3.9}}} \\
\midrule
Scatterbrain  &  {\transparent{0.3}{1.04}}  &  {\transparent{0.3}{78.66}}  &  {\transparent{0.3}{20.5}}  &  {\transparent{0.3}{42.6{\fontsize{5}{5}\selectfont $\pm$8.1}}}  &  {\transparent{0.3}{43.4{\fontsize{5}{5}\selectfont $\pm$9.5}}} \\
\midrule
MRA-2  &  {\textbf{0.36}}  &  {\textbf{14.05}}  &  {\transparent{0.8}{56.4}}  &  {\textbf{73.2{\fontsize{5}{5}\selectfont $\pm$0.2}}}  &  {\textbf{74.1{\fontsize{5}{5}\selectfont $\pm$0.5}}} \\
  &  {\transparent{0.8}{0.43}}  &  {\transparent{0.8}{17.15}}  &  {\textbf{57.3}}  &  {\transparent{0.8}{73.0{\fontsize{5}{5}\selectfont $\pm$1.0}}}  &  {\transparent{0.8}{73.9{\fontsize{5}{5}\selectfont $\pm$0.8}}} \\
MRA-2-s  &  {\textbf{0.31}}  &  {\textbf{11.93}}  &  {\transparent{0.8}{56.7}}  &  {\textbf{73.6{\fontsize{5}{5}\selectfont $\pm$1.6}}}  &  {\textbf{74.3{\fontsize{5}{5}\selectfont $\pm$1.1}}} \\
  &  {\textbf{0.38}}  &  {\textbf{14.57}}  &  {\textbf{57.5}}  &  {\textbf{73.9{\fontsize{5}{5}\selectfont $\pm$0.6}}}  &  {\textbf{74.6{\fontsize{5}{5}\selectfont $\pm$0.8}}} \\
\bottomrule
\end{tabular}
\end{scriptsize}
\end{center}
\vspace{-0.1in}
\caption{Summary of 512 length RoBERTa-small models. We also include a 95\% error bar for experiments that have a small compute burden. 
}
\label{tab:roberta-small-512}
\vspace{-0.1in}
\end{table}

Only a handful of schemes including Longformer, Big Bird, and MRA-2(-s) are fully compatible with pretrained models. Scatterbrain has a reasonable accuracy without further finetuning, but the training diverges when finetuning the model. The  other methods cannot get a satisfactory level of accuracy. These statements also hold for the downstream finetuning results, shown in Tab. \ref{tab:roberta-base-512}. Our method has the best performance among baselines for both MLM and MNLI. 
Meanwhile, it has a much better time and memory efficiency.

\begin{table}[!htbp]
\vspace{-0.03in}
\begin{center}
\begin{scriptsize}
\setlength{\tabcolsep}{5pt}
\begin{tabular}{lcccc}
\toprule
Method & Time (ms) & Mem (GB) & MLM & WikiHop  \\
\midrule
Transformer  &  {\transparent{0.3}{30.88}}  &  {\transparent{0.3}{3.93}}  &  {\textbf{74.3}}  &  {\textbf{74.6}} \\
\midrule
Longformer  &  {\transparent{0.8}{10.20}}  &  {\transparent{0.8}{0.35}}  &  {\transparent{0.3}{71.1}}  &  {\transparent{0.3}{60.8}} \\
\midrule
Big Bird  &  {\transparent{0.3}{17.53}}  &  {\transparent{0.3}{0.59}}  &  {\transparent{0.3}{-}}  &  {\transparent{0.3}{-}} \\
\midrule
MRA-2  &  {\textbf{7.03}}  &  {\textbf{0.28}}  &  {\transparent{0.8}{73.1}}  &  {\transparent{0.8}{71.2}} \\
 &  {\transparent{0.8}{9.25}}  &  {\transparent{0.8}{0.38}}  &  {\textbf{73.7}}  &  {\textbf{73.4}} \\
MRA-2-s  &  {\textbf{6.37}}  &  {\textbf{0.23}}  &  {\transparent{0.8}{73.0}}  &  {\transparent{0.8}{71.8}} \\
  &  {\textbf{8.62}}  &  {\transparent{0.8}{0.38}}  &  {\textbf{73.8}}  &  {\textbf{74.1}} \\
\bottomrule
\end{tabular}
\end{scriptsize}
\end{center}
\vspace{-0.12in}
\caption{Summary of 4096 length RoBERTa-base models. Since Big Bird is slow and we are not able to reduce its training time using multiple GPU, we cannot can test Big Bird for 4096 sequence. 
}
\label{tab:roberta-base-4096}
\vspace{-0.1in}
\end{table}

Since many baselines are not compatible with the trained model weights (performance degrades when substituting the self-attention module), to make the comparison fair for all methods, we also evaluate models trained from scratch. Due to the large number of baselines we use, we train a small variant of RoBERTa on English Wikipedia and BookCorpus \cite{zhu2015aligning} to keep the training cost reasonable. Then, we again finetune the model on downstream task (MNLI). Results are summarized in Tab. \ref{tab:roberta-small-512}. Only a few methods (including ours) achieve both good performance and efficiency.

\begin{table}[!htbp]
\vspace{-0.03in}
\begin{center}
\begin{scriptsize}
\setlength{\tabcolsep}{5pt}
\begin{tabular}{lcccc}
\toprule
Method & Time (ms) & Mem (GB) & MLM & WikiHop  \\
\midrule
Transformer  &  {\transparent{0.3}{15.36}}  &  {\transparent{0.3}{1.96}}  &  {\textbf{55.8}}  &  {\textbf{54.6{\fontsize{5}{5}\selectfont $\pm$1.6}}} \\
\midrule
Performer  &  {\transparent{0.3}{5.13}}  &  {\transparent{0.3}{0.24}}  &  {\transparent{0.3}{23.2}}  &  {\transparent{0.3}{43.7{\fontsize{5}{5}\selectfont $\pm$0.6}}} \\
\midrule
Linformer  &  {\textbf{2.85}}  &  {\transparent{0.8}{0.21}}  &  {\transparent{0.3}{13.8}}  &  {\transparent{0.3}{11.0{\fontsize{5}{5}\selectfont $\pm$0.4}}} \\
\midrule
SOFT  &  {\textbf{2.46}}  &  {\textbf{0.11}}  &  {\transparent{0.3}{25.9}}  &  {\transparent{0.3}{14.0{\fontsize{5}{5}\selectfont $\pm$8.6}}} \\
  &  {\transparent{0.3}{5.92}}  &  {\transparent{0.3}{0.24}}  &  {\transparent{0.3}{31.0}}  &  {\transparent{0.3}{12.1{\fontsize{5}{5}\selectfont $\pm$1.9}}} \\
SOFT + Conv  &  {\textbf{3.33}}  &  {\textbf{0.11}}  &  {\transparent{0.8}{52.8}}  &  {\transparent{0.3}{30.8{\fontsize{5}{5}\selectfont $\pm$29.3}}} \\
\midrule
Nystromformer  &  {\textbf{2.38}}  &  {\textbf{0.11}}  &  {\transparent{0.3}{34.7}}  &  {\transparent{0.8}{44.0{\fontsize{5}{5}\selectfont $\pm$0.2}}} \\
  &  {\transparent{0.8}{4.34}}  &  {\transparent{0.3}{0.27}}  &  {\transparent{0.8}{46.8}}  &  {\transparent{0.8}{46.0{\fontsize{5}{5}\selectfont $\pm$0.8}}} \\
Nystrom + Conv  &  {\textbf{3.23}}  &  {\textbf{0.12}}  &  {\transparent{0.8}{53.1}}  &  {\textbf{54.6{\fontsize{5}{5}\selectfont $\pm$0.8}}} \\
\midrule
YOSO  &  {\transparent{0.8}{4.15}}  &  {\textbf{0.12}}  &  {\transparent{0.8}{47.8}}  &  {\transparent{0.8}{52.4{\fontsize{5}{5}\selectfont $\pm$0.1}}} \\
  &  {\transparent{0.3}{5.07}}  &  {\transparent{0.8}{0.17}}  &  {\transparent{0.8}{49.9}}  &  {\transparent{0.8}{52.8{\fontsize{5}{5}\selectfont $\pm$0.5}}} \\
YOSO + Conv  &  {\transparent{0.3}{5.45}}  &  {\transparent{0.8}{0.13}}  &  {\textbf{55.1}}  &  {\textbf{53.2{\fontsize{5}{5}\selectfont $\pm$0.7}}} \\
\midrule
Reformer  &  {\transparent{0.8}{5.04}}  &  {\transparent{0.3}{0.24}}  &  {\transparent{0.8}{52.2}}  &  {\textbf{53.7{\fontsize{5}{5}\selectfont $\pm$0.9}}} \\
\midrule
Longformer  &  {\transparent{0.8}{4.88}}  &  {\transparent{0.8}{0.17}}  &  {\transparent{0.8}{52.4}}  &  {\transparent{0.8}{52.3{\fontsize{5}{5}\selectfont $\pm$0.7}}} \\
\midrule
Big Bird  &  {\transparent{0.3}{8.68}}  &  {\transparent{0.3}{0.29}}  &  {\textbf{54.4}}  &  {\textbf{54.3{\fontsize{5}{5}\selectfont $\pm$0.7}}} \\
\midrule
H-Transformer-1D  &  {\transparent{0.8}{3.93}}  &  {\textbf{0.12}}  &  {\transparent{0.3}{41.1}}  &  {\transparent{0.3}{43.7{\fontsize{5}{5}\selectfont $\pm$0.7}}} \\
\midrule
Scatterbrain  &  {\transparent{0.3}{8.83}}  &  {\transparent{0.3}{0.31}}  &  {\transparent{0.3}{35.8}}  &  {\transparent{0.3}{12.1{\fontsize{5}{5}\selectfont $\pm$0.9}}} \\
\midrule
MRA-2  &  {\textbf{3.43}}  &  {\transparent{0.8}{0.14}}  &  {\textbf{54.2}}  &  {\transparent{0.8}{52.6{\fontsize{5}{5}\selectfont $\pm$0.9}}} \\
  &  {\transparent{0.8}{4.52}}  &  {\transparent{0.8}{0.19}}  &  {\textbf{55.2}}  &  {\textbf{54.0{\fontsize{5}{5}\selectfont $\pm$0.9}}} \\
MRA-2-s  &  {\textbf{3.12}}  &  {\textbf{0.12}}  &  {\textbf{53.8}}  &  {\transparent{0.8}{51.8{\fontsize{5}{5}\selectfont $\pm$0.9}}} \\
  &  {\transparent{0.8}{4.13}}  &  {\transparent{0.8}{0.19}}  &  {\textbf{55.1}}  &  {\textbf{53.6{\fontsize{5}{5}\selectfont $\pm$0.8}}} \\
\bottomrule
\end{tabular}
\end{scriptsize}
\end{center}
\vspace{-0.12in}
\caption{Summary of 4096 length RoBERTa-small models. 
}
\label{tab:roberta-small-4096}
\vspace{-0.14in}
\end{table}

\textbf{Longer Sequences}. To evaluate the performance of our MRA-2(-s) on longer sequences, we extend the 512 length models to 4096 length. We extend the positional embedding and further train the models on English Wikipedia, Bookcorpus \cite{zhu2015aligning}, one third of Stories dataset \cite{trinh2019simple}, and one third of RealNews dataset \cite{zellers2019defending} following \cite{beltagy2020longformer}. Then, the 4096 length models are finetuned on WikiHop dataset \cite{wikihop} to assess the performance of these models on downstream tasks. The results are summarized in Tab. \ref{tab:roberta-base-4096} for base models and Tab. \ref{tab:roberta-small-4096} for small models. Our MRA-2 is again one of the top performing methods with high efficiency among baselines. Note that the difference in WikiHop performance of Longformer \cite{beltagy2020longformer} from the original paper is due to a much larger window size which has an even slower runtime. Linformer \cite{Wang2020LinformerSW} does not seem to be able to adapt the weights from its 512 length model to a 4096 model. It is interesting that the convolution in Nystromf\"{o}rmer \cite{xiong2021nystromformer} seems to play an important role in boosting performance.

\subsection{Long Range Arena}

\begin{table}[!htbp]
\vspace{-0.03in}
\begin{center}
\begin{scriptsize}
\setlength{\tabcolsep}{2pt}
\begin{tabular}{lcccccc}
\toprule
Method & Listops & Text & Retrieval & Image & Pathfinder & Avg  \\
\midrule
Transformer  &  {\transparent{0.8}{37.1{\fontsize{5}{5}\selectfont $\pm$0.4}}}  &  {\textbf{65.2{\fontsize{5}{5}\selectfont $\pm$0.6}}}  &  {\transparent{0.8}{79.6{\fontsize{5}{5}\selectfont $\pm$1.7}}}  &  {\transparent{0.3}{38.5{\fontsize{5}{5}\selectfont $\pm$0.7}}}  &  {\textbf{72.8{\fontsize{5}{5}\selectfont $\pm$1.1}}}  &  {\transparent{0.8}{58.7{\fontsize{5}{5}\selectfont $\pm$0.3}}} \\
\midrule
Performer  &  {\transparent{0.3}{36.7{\fontsize{5}{5}\selectfont $\pm$0.2}}}  &  {\transparent{0.8}{65.2{\fontsize{5}{5}\selectfont $\pm$0.9}}}  &  {\transparent{0.3}{79.5{\fontsize{5}{5}\selectfont $\pm$1.4}}}  &  {\transparent{0.3}{38.6{\fontsize{5}{5}\selectfont $\pm$0.7}}}  &  {\transparent{0.8}{71.4{\fontsize{5}{5}\selectfont $\pm$0.7}}}  &  {\transparent{0.8}{58.3{\fontsize{5}{5}\selectfont $\pm$0.1}}} \\
\midrule
Linformer  &  {\textbf{37.4{\fontsize{5}{5}\selectfont $\pm$0.3}}}  &  {\transparent{0.3}{57.0{\fontsize{5}{5}\selectfont $\pm$1.1}}}  &  {\transparent{0.3}{78.4{\fontsize{5}{5}\selectfont $\pm$0.1}}}  &  {\transparent{0.3}{38.1{\fontsize{5}{5}\selectfont $\pm$0.3}}}  &  {\transparent{0.3}{67.2{\fontsize{5}{5}\selectfont $\pm$0.1}}}  &  {\transparent{0.3}{55.6{\fontsize{5}{5}\selectfont $\pm$0.3}}} \\
\midrule
SOFT  &  {\transparent{0.3}{36.3{\fontsize{5}{5}\selectfont $\pm$1.4}}}  &  {\transparent{0.8}{65.2{\fontsize{5}{5}\selectfont $\pm$0.0}}}  &  {\textbf{83.3{\fontsize{5}{5}\selectfont $\pm$1.0}}}  &  {\transparent{0.3}{35.3{\fontsize{5}{5}\selectfont $\pm$1.3}}}  &  {\transparent{0.3}{67.7{\fontsize{5}{5}\selectfont $\pm$1.1}}}  &  {\transparent{0.3}{57.5{\fontsize{5}{5}\selectfont $\pm$0.5}}} \\
SOFT + Conv  &  {\transparent{0.8}{37.1{\fontsize{5}{5}\selectfont $\pm$0.4}}}  &  {\transparent{0.8}{65.2{\fontsize{5}{5}\selectfont $\pm$0.4}}}  &  {\textbf{82.9{\fontsize{5}{5}\selectfont $\pm$0.0}}}  &  {\transparent{0.3}{37.1{\fontsize{5}{5}\selectfont $\pm$4.7}}}  &  {\transparent{0.3}{68.1{\fontsize{5}{5}\selectfont $\pm$0.4}}}  &  {\transparent{0.8}{58.1{\fontsize{5}{5}\selectfont $\pm$0.9}}} \\
\midrule
Nystromformer  &  {\transparent{0.3}{24.7{\fontsize{5}{5}\selectfont $\pm$17.5}}}  &  {\textbf{65.7{\fontsize{5}{5}\selectfont $\pm$0.1}}}  &  {\textbf{80.2{\fontsize{5}{5}\selectfont $\pm$0.3}}}  &  {\transparent{0.8}{38.8{\fontsize{5}{5}\selectfont $\pm$2.9}}}  &  {\textbf{73.1{\fontsize{5}{5}\selectfont $\pm$0.1}}}  &  {\transparent{0.3}{56.5{\fontsize{5}{5}\selectfont $\pm$2.8}}} \\
Nystrom + Conv  &  {\transparent{0.3}{30.6{\fontsize{5}{5}\selectfont $\pm$8.9}}}  &  {\textbf{65.7{\fontsize{5}{5}\selectfont $\pm$0.2}}}  &  {\transparent{0.3}{78.9{\fontsize{5}{5}\selectfont $\pm$1.2}}}  &  {\textbf{43.2{\fontsize{5}{5}\selectfont $\pm$3.4}}}  &  {\transparent{0.3}{69.1{\fontsize{5}{5}\selectfont $\pm$1.0}}}  &  {\transparent{0.3}{57.5{\fontsize{5}{5}\selectfont $\pm$1.5}}} \\
\midrule
YOSO  &  {\transparent{0.8}{37.0{\fontsize{5}{5}\selectfont $\pm$0.3}}}  &  {\transparent{0.3}{63.1{\fontsize{5}{5}\selectfont $\pm$0.2}}}  &  {\transparent{0.3}{78.3{\fontsize{5}{5}\selectfont $\pm$0.7}}}  &  {\transparent{0.8}{40.8{\fontsize{5}{5}\selectfont $\pm$0.8}}}  &  {\textbf{72.9{\fontsize{5}{5}\selectfont $\pm$0.6}}}  &  {\transparent{0.8}{58.4{\fontsize{5}{5}\selectfont $\pm$0.3}}} \\
YOSO + Conv  &  {\textbf{37.2{\fontsize{5}{5}\selectfont $\pm$0.5}}}  &  {\transparent{0.8}{64.9{\fontsize{5}{5}\selectfont $\pm$1.2}}}  &  {\transparent{0.3}{78.5{\fontsize{5}{5}\selectfont $\pm$0.9}}}  &  {\textbf{44.6{\fontsize{5}{5}\selectfont $\pm$0.7}}}  &  {\transparent{0.8}{69.5{\fontsize{5}{5}\selectfont $\pm$3.5}}}  &  {\textbf{59.0{\fontsize{5}{5}\selectfont $\pm$1.1}}} \\
\midrule
Reformer  &  {\transparent{0.3}{18.9{\fontsize{5}{5}\selectfont $\pm$2.4}}}  &  {\transparent{0.8}{64.9{\fontsize{5}{5}\selectfont $\pm$0.4}}}  &  {\transparent{0.3}{78.2{\fontsize{5}{5}\selectfont $\pm$1.6}}}  &  {\textbf{42.4{\fontsize{5}{5}\selectfont $\pm$0.4}}}  &  {\transparent{0.3}{68.9{\fontsize{5}{5}\selectfont $\pm$1.1}}}  &  {\transparent{0.3}{54.7{\fontsize{5}{5}\selectfont $\pm$0.2}}} \\
\midrule
Longformer  &  {\transparent{0.8}{37.2{\fontsize{5}{5}\selectfont $\pm$0.3}}}  &  {\transparent{0.3}{64.1{\fontsize{5}{5}\selectfont $\pm$0.1}}}  &  {\transparent{0.8}{79.7{\fontsize{5}{5}\selectfont $\pm$1.1}}}  &  {\textbf{42.6{\fontsize{5}{5}\selectfont $\pm$0.1}}}  &  {\transparent{0.8}{70.7{\fontsize{5}{5}\selectfont $\pm$0.8}}}  &  {\textbf{58.9{\fontsize{5}{5}\selectfont $\pm$0.1}}} \\
\midrule
Big Bird  &  {\textbf{37.4{\fontsize{5}{5}\selectfont $\pm$0.3}}}  &  {\transparent{0.3}{64.3{\fontsize{5}{5}\selectfont $\pm$1.1}}}  &  {\transparent{0.8}{79.9{\fontsize{5}{5}\selectfont $\pm$0.1}}}  &  {\transparent{0.8}{40.9{\fontsize{5}{5}\selectfont $\pm$1.1}}}  &  {\textbf{72.6{\fontsize{5}{5}\selectfont $\pm$0.7}}}  &  {\textbf{59.0{\fontsize{5}{5}\selectfont $\pm$0.3}}} \\
\midrule
H-Transformer-1D  &  {\transparent{0.3}{30.4{\fontsize{5}{5}\selectfont $\pm$8.8}}}  &  {\textbf{66.0{\fontsize{5}{5}\selectfont $\pm$0.2}}}  &  {\textbf{80.1{\fontsize{5}{5}\selectfont $\pm$0.4}}}  &  {\textbf{42.1{\fontsize{5}{5}\selectfont $\pm$0.8}}}  &  {\transparent{0.8}{70.7{\fontsize{5}{5}\selectfont $\pm$0.1}}}  &  {\transparent{0.8}{57.8{\fontsize{5}{5}\selectfont $\pm$1.8}}} \\
\midrule
Scatterbrain  &  {\textbf{37.5{\fontsize{5}{5}\selectfont $\pm$0.1}}}  &  {\transparent{0.3}{64.4{\fontsize{5}{5}\selectfont $\pm$0.3}}}  &  {\transparent{0.8}{79.6{\fontsize{5}{5}\selectfont $\pm$0.1}}}  &  {\transparent{0.3}{38.0{\fontsize{5}{5}\selectfont $\pm$0.9}}}  &  {\transparent{0.3}{54.8{\fontsize{5}{5}\selectfont $\pm$7.8}}}  &  {\transparent{0.3}{54.9{\fontsize{5}{5}\selectfont $\pm$1.4}}} \\
\midrule
MRA-2  &  {\transparent{0.8}{37.2{\fontsize{5}{5}\selectfont $\pm$0.3}}}  &  {\textbf{65.4{\fontsize{5}{5}\selectfont $\pm$0.1}}}  &  {\transparent{0.8}{79.6{\fontsize{5}{5}\selectfont $\pm$0.6}}}  &  {\transparent{0.8}{39.5{\fontsize{5}{5}\selectfont $\pm$0.9}}}  &  {\textbf{73.6{\fontsize{5}{5}\selectfont $\pm$0.4}}}  &  {\textbf{59.0{\fontsize{5}{5}\selectfont $\pm$0.3}}} \\
MRA-2-s  &  {\textbf{37.4{\fontsize{5}{5}\selectfont $\pm$0.5}}}  &  {\transparent{0.3}{64.3{\fontsize{5}{5}\selectfont $\pm$0.8}}}  &  {\textbf{80.3{\fontsize{5}{5}\selectfont $\pm$0.1}}}  &  {\transparent{0.8}{41.1{\fontsize{5}{5}\selectfont $\pm$0.4}}}  &  {\textbf{73.8{\fontsize{5}{5}\selectfont $\pm$0.6}}}  &  {\textbf{59.4{\fontsize{5}{5}\selectfont $\pm$0.2}}} \\
\bottomrule
\end{tabular}
\end{scriptsize}
\end{center}
\vspace{-0.12in}
\caption{Test set accuracy of LRA tasks. Since the benchmark consist of multiple tasks with different sequence length, we do not include the efficiency components in the table. 
}
\label{tab:lra}
\vspace{-0.05in}
\end{table}

The Long Range Arena (LRA) \cite{tay2020long} has been proposed to provide a lightweight benchmark to quickly compare the capability of long sequence modeling for Transformers. 
Due to a consistency issue and code compatibility of official LRA benchmark (see \href{https://github.com/google-research/long-range-arena/issues/34}{Issue-34}, \href{https://github.com/google-research/long-range-arena/issues/35}{Issue-35}, and \citet{lee2021fnet}), we use the LRA code provided by \cite{xiong2021nystromformer} and follow exactly the same hyperparameter setting. The results are shown in Tab. \ref{tab:lra}. Our method has the best performance compared to others. 

{\bf Caveats.} A reader may ask why Longformer, Big Bird, and MRA-2-s perform better than standard Transformers \cite{vaswani2017attention} despite being approximations.  The performance difference is most obvious on the image task. We also found that Longformer with a smaller local attention window ($256$, $128$, $64$) tends to offer better performance ($39.52$, $42.11$, $42.71$, respectively) on the image task. One reason is that standard self-attention needs larger datasets to compensate for its lack of locality bias \cite{xu2021vitae, dascoli2021convit}. Hence, due to the small datasets (i.e., its lightweight nature), the LRA accuracy metrics should be interpreted with caution. 

\begin{table}[!htbp]
\vspace{-0.03in}
\begin{center}
\begin{scriptsize}
\setlength{\tabcolsep}{5pt}
\begin{tabular}{lcccc}
\toprule
Method & Time (ms) & Mem (MB) & Top-1 & Top-5  \\
\midrule
Transformer  &  {\transparent{0.3}{1.24}}  &  {\transparent{0.3}{45.5}}  &  {\transparent{0.8}{48.7}}  &  {\transparent{0.8}{73.7}} \\
\midrule
Reformer  &  {\transparent{0.3}{1.14}}  &  {\transparent{0.3}{19.1}}  &  {\transparent{0.3}{39.6}}  &  {\transparent{0.3}{65.5}} \\
\midrule
Longformer  &  {\transparent{0.8}{1.12}}  &  {\transparent{0.8}{13.7}}  &  {\textbf{49.1}}  &  {\transparent{0.8}{73.9}} \\
\midrule
H-Transformer-1D  &  {\transparent{0.8}{1.03}}  &  {\textbf{9.8}}  &  {\transparent{0.3}{48.7}}  &  {\textbf{73.9}} \\
\midrule
MRA-2  &  {\textbf{1.00}}  &  {\transparent{0.8}{11.8}}  &  {\transparent{0.8}{48.9}}  &  {\transparent{0.3}{73.6}} \\
MRA-2-s  &  {\textbf{0.98}}  &  {\textbf{9.7}}  &  {\textbf{49.2}}  &  {\textbf{73.9}} \\
\bottomrule
\end{tabular}
\end{scriptsize}
\end{center}
\vspace{-0.12in}
\caption{Summary of ImageNet results trained on 4-layer Transformers. We reports both top-1 and top-5 accuracy. 
}
\label{tab:imagenet}
\vspace{-0.12in}
\end{table}

\textbf{ImageNet}. To test the performance on large datasets, we use ImageNet \cite{imagenet} as a large scale alternative to CIFAR-10 \citep{krizhevsky2009learning} used in image task of LRA. Further, data augmentation is used to increase the dataset size. Like LRA, we focus on small models and use a 4-layer Transformer (see \S\ref{sec:appendix_exp} for more details). 
Model specific hyperparameters are the same as the ones used on LRA. The results are shown in Tab. \ref{tab:imagenet}. MRA-2-s is the top performing approach. Standard self-attention and MRA-2 can clearly perform better on a large dataset.

\section{Conclusion}

We show that Multiresolution analysis (MRA) 
provides fresh 
ideas for efficiently approximating 
self-attention, which subsumes many piecemeal 
approaches in the literature. 
We expect that 
exploiting the links to MRA will allow
leveraging a vast body of technical results developed 
over many decades. But we show that there 
are tangible practical benefits available immediately. 
When some consideration is given to which design choices or heuristics for a MRA-based self-attention scheme will interface well with mature software stacks 
and modern hardware, we obtain 
a procedure with strong advantages 
across both {\em performance/accuracy and efficiency}. 
Further, our implementation can be directly plugged into existing Transformers, a feature missing in some existing 
efficient transformer implementations. We show use cases 
on longer sequence tasks and in resource limited setting 
but believe that various other applications of Transformers will also benefit in the short term.
Finally, we should note the lack of integrated software support for MRA as well as our specialized model in current deep learning libraries. Overcoming this limitation required implementing custom CUDA kernels for some generic block sparsity operators. Therefore, extending our algorithm for other use cases may involve reimplementing the kernel. We hope that with broader use of MRA-based methods, the software support will improve thereby reducing this implementation barrier.

\section*{Acknowledgments}

This work was supported by the UW AmFam Data Science Institute through funds from American Family Insurance. VS was supported by NIH RF1 AG059312. We thank Sathya Ravi for  discussions regarding multigrid methods, Karu Sankaralingam for suggestions regarding hardware support for sparsity/block sparsity, and Pranav Pulijala for integrating our algorithm within the HuggingFace library.


\bibliography{main}
\bibliographystyle{icml2022}

\newpage
\appendix
\onecolumn

\section{Appendix}

The appendix includes more details regarding the formulation, analysis, and experiments.

\subsection{Visualizing Approximation Procedure in Linear Scale}
\label{sec:approx_procedure_linear}

\begin{figure*}[!htbp]
\centering
\includegraphics[width=6.5in]{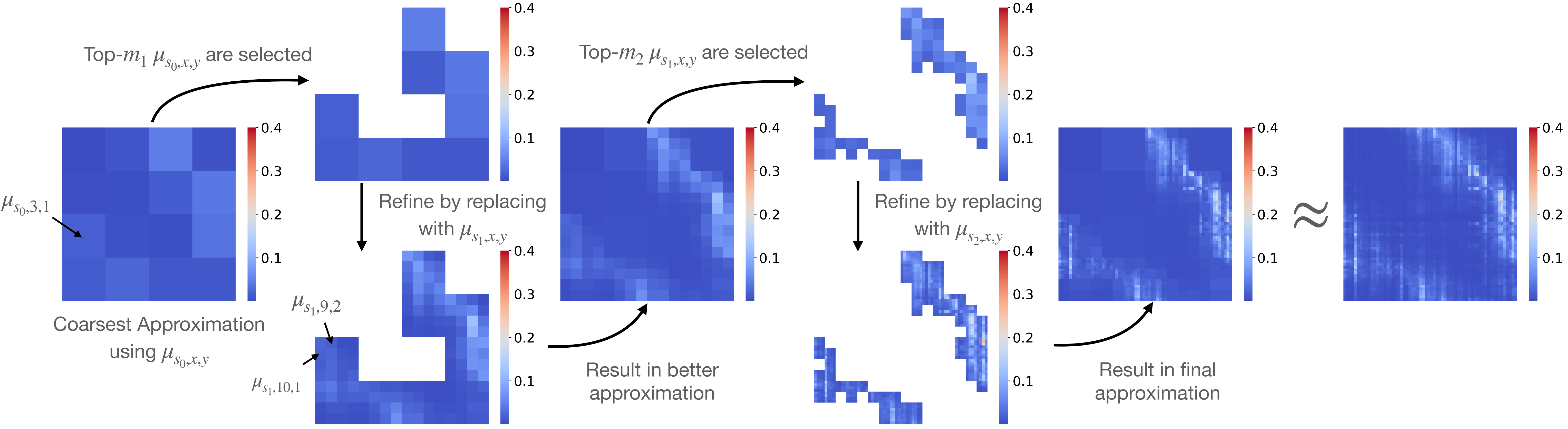}
\caption{Illustration of approximation scheme for $R = \{16, 4, 1\}$ in linear scale. }
\label{fig:multires_approx_linear}
\end{figure*}

We also show a visualization of our approximation procedure using a linear scale in Fig. \ref{fig:multires_approx_linear}. This figure gives a better illustration of how approximation quality increases as approximation procedure proceeds. 

\subsection{Link to Sparsity and Low Rank}
\label{sec:appdenix_sp_lr_dec}



\label{sec:sparse_low_rank_link}

\begin{figure}[!htbp]
\centering
\includegraphics[width=4in]{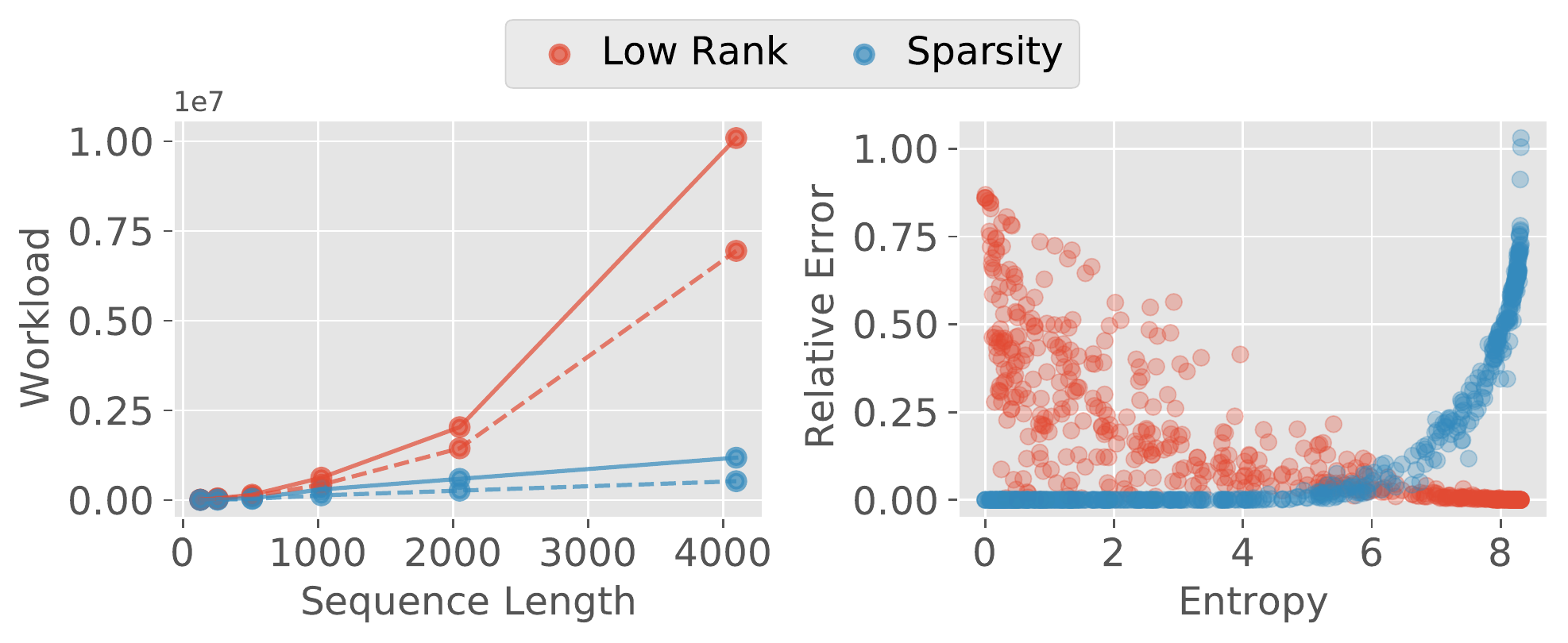}
\caption{The solid and dashed lines in the left plot is the theoretical workload, without considering the overhead, necessary to get relative error less than $0.05$ and $0.1$, respectively, for different sequence length. Ideally, the workload should be linear to sequence length. The right plot is the approximation error vs entropy of self-attention matrices, similar to \cite{chen2021scatterbrain}, at $1/4$ of workload for standard self-attention (keeping $25$\% of rank and nonzero entries for low rank and sparsity, respectively). Entropy of softmax is used as a proxy for the spread of attention. Relative error is defines as $||\hat{D} \hat{\mathcal{A}} V - D \mathcal{A} V||_F / ||D \mathcal{A} V||_F$}
\label{fig:lr_vs_sp}
\end{figure}

Low rank and sparsity are two popular directions for efficient self-attention. To explore the potential of these two types of approximations, we set aside the efficiency consideration and use the best possible methods for each type of approximation. Specifically, subject to $||\hat{\mathcal{A}} - \mathcal{A}||_F \leq \epsilon$, we use sparse approximation which minimizes $||\hat{\mathcal{A}}||_0$ by finding support on the largest entries of $\mathcal{A}$, and low rank approximation which minimizes $\text{rank}(\hat{\mathcal{A}})$ via truncated SVD. As shown in Fig. \ref{fig:lr_vs_sp}, these two types of methods are limited for approximating self-attention. The low rank method requires superlinear cost to maintain the approximation accuracy and fails when the entropy of self-attention is smaller. In many cases, sparse approximation is sufficient, but in some cases when the self-attention matrices are less sparse and have larger entropy, the sparse approximation would fail as well. 
This motivates the use of sparse + low rank approximation. 
It can be achieved via robust PCA which decomposes approximation to $\hat{\mathcal{A}} = S + L$ by solving an optimization objective $||S||_0 + \lambda \ \text{rank}(L)$. A convex relaxation $||S||_1 + \lambda ||L||_*$ \cite{robustpca} is used to make the optimization tractable, but the cost of finding a good solution is still more than $O(n^2)$, which is not suitable for efficient self-attention. 
Scatterbrain \cite{chen2021scatterbrain} proposes to combine an existing sparse attention with an existing low rank attention to obtain a sparse + low rank approximation and avoid the expensive cost of robust PCA. 

Interestingly, a special form of our work offers an alternative to Scatterbrain's approach for sparse + low rank approximation. When $R = \{b, 1\}$ for some $b$, our MRA-2 can be viewed as a sparse + low rank approximation. Specifically, let 
\begin{equation}
\hat{\mathcal{A}}_r = \sum_{B^b_{x, y} \in \mathcal{J}} \mu_{b,x,y} B^b_{x,y}
\label{eq:single_res_components}
\end{equation}
for a resolution $b$, then $\hat{\mathcal{A}}_1 + \hat{\mathcal{A}}_b$ serves as a reasonably good solution for a relaxed version of sparse and low rank decomposition. 

Let us consider an alternative relaxation of robust PCA objective, 
\begin{equation}
|| S ||_0 + \lambda || L ||_F
\label{eq:relaxed_rpca}
\end{equation}
Note that $|| L ||_F$ is easier to compute. And we have $||L||_* \leq \sqrt{n} || L ||_F$ \cite{nuclear_norm_f_norm_inequ}. In fact, \citet{nuclear_norm_f_norm_relation} shows that solutions obtained by minimizing $||L||_*$ and $|| L ||_F$ are two solutions of a low rank recovery problem and are identical in some situations. The optimal solution for objective \eqref{eq:relaxed_rpca} can be easily obtained. For $\epsilon = 0$, there exists a $m$ such that the optimal $S$ has support on the $m$ largest entries of $\mathcal{A}$, and $L = \mathcal{A} - S$. However, for practical use, the recovered sparsity cannot be efficiently used on a GPU due to scattered support. Further, the complexity is $O(n^2)$ since we found that we still need to find the largest entries of $\mathcal{A}$. Suppose we restrict $S$ to be a block sparse matrix, namely, supported on a subset of $\{\text{supp}(B^{b}_{x, y})\}_{x, y}$, then a GPU can exploit this block sparsity structure to significantly accelerate computation. Further, the optimal $S$ is supported on the regions with the largest $\mu_{b, x, y}'$ defined as
\begin{equation}
\begin{split}
\mu_{b, x, y}' &= \frac{1}{b^2} \sum_{(i, j) \in \supp(B^b_{x,y})} \exp(2 \mathcal{P}_{i,j}) \\
\end{split}
\label{eq:optimal_block_scores}
\end{equation}
which is similar to \eqref{eq:average_of_exp}. As a result, similar to approximating \eqref{eq:average_of_exp} with \eqref{eq:exp_of_average}, we use the lower bound $\mu_{b, x, y}^2$ as an proxy for \eqref{eq:optimal_block_scores}. Then, the cost of locating support blocks is only $O(n^2 / b^2)$. Consequently, the resulting solution $S$ is supported on the regions with the largest $\mu_{b, x, y}$. Note that this $S$ is exactly $\hat{\mathcal{A}}_1$ with an appropriate budget $m$. And the $\hat{\mathcal{A}}_b$ is a reasonable solution for $L$ since $|| \hat{\mathcal{A}}_b ||_F$ is small and $\text{rank}(\hat{\mathcal{A}}_b) \leq n / b$.

\begin{figure}[!htbp]
\centering
\includegraphics[width=4in]{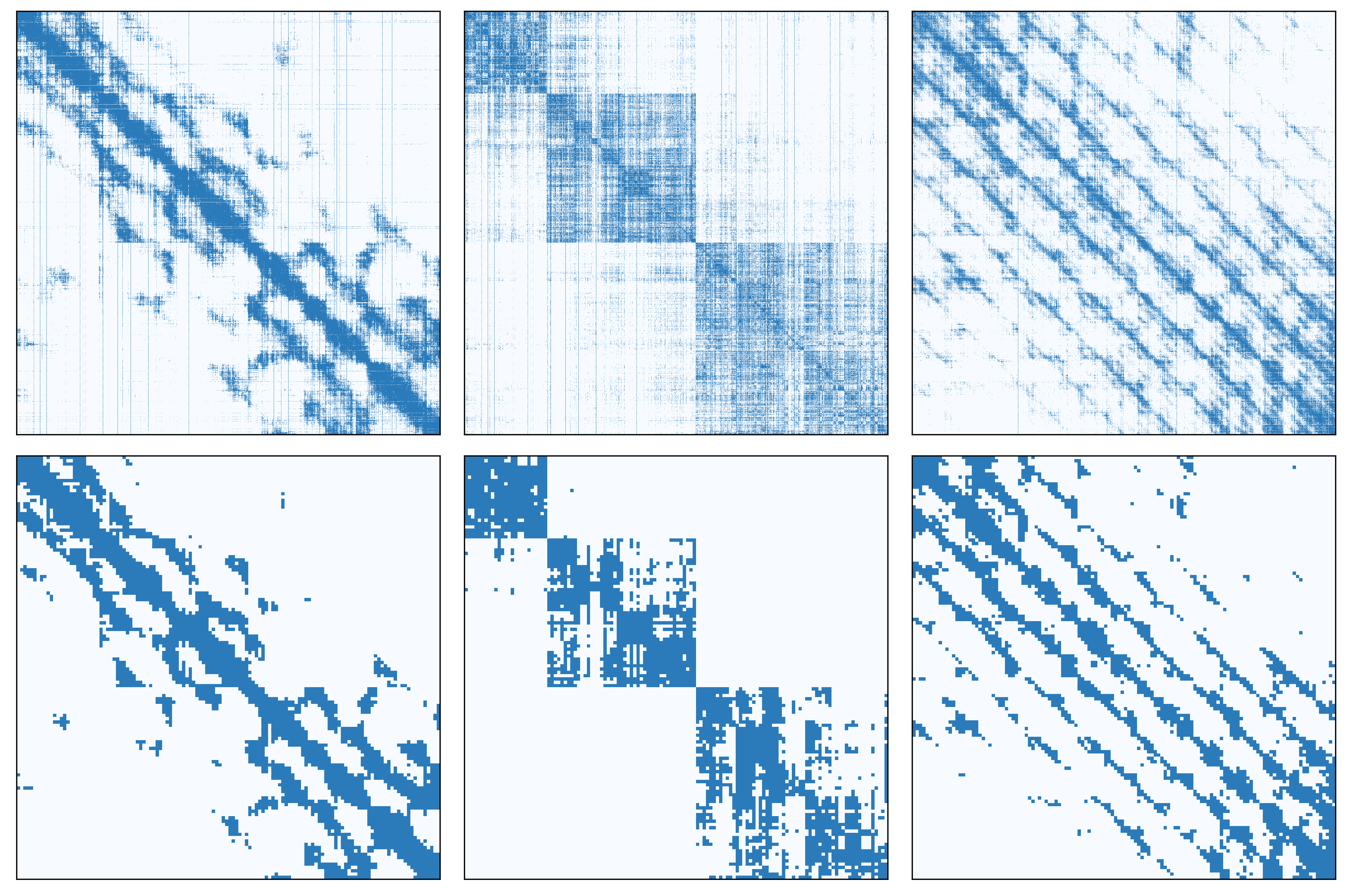}
\caption{The top 3 plots are the optimal supports for 3 typical self-attention matrices at $80$\% sparsity. The bottom 3 plots are supports found via our MRA-2 at $80$\% sparsity. }
\label{fig:sp_pattern}
\end{figure}

We empirically evaluate the quality of this sparse solution. \cite{Kovaleva2019} showed that the BERT model \cite{devlin2018bert} has multiple self-attention patterns capturing different semantic information. We investigate the types of possible self-attention of a pretrained RoBERTa model \cite{liu2019roberta}. And we show the optimal sparsity supports for self-attention matrices generated from RoBERTa-base with 4096 sequence in the top plots of Fig. \ref{fig:sp_pattern}. Our MRA-2, as shown in the bottom plots of Fig. \ref{fig:sp_pattern}, is able to find a reasonably good sparse solution for \eqref{eq:relaxed_rpca}. Notice that while many self-attention matrices tend out to be diagonally banded matrices, which Longformer and Big Bird can approximate well, they are not the only possible structure. Diagonally banded structure is not sufficient to approximate the last two self-attention patterns well. 

\subsection{Analysis}
\label{sec:appendix_analysis}

In this section, we discuss in more detail some manipulations described in the main paper.

\begin{observation}
We can re-write $\hat{\mathcal{A}}^* = \sum_{B^{s}_{x, y} \in \mathcal{J}} \alpha^{s}_{x, y} B^{s}_{x, y}$ as $(\hat{\mathcal{A}}^*)_{i,j} = \mu_{s, x, y}^*$ where $(s, x, y)$ is the index of $B^{s}_{x, y} \in \mathcal{J}$ that has the smallest support region and is supported on $(i, j)$.
\end{observation}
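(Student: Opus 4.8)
The plan is to fix an arbitrary entry $(i,j)$ lying in the support of at least one component of $\mathcal{J}$ and to track how coefficients accumulate at that entry. Since each $B^s_{x,y}$ is the $0/1$ indicator of its block and the blocks at a fixed scale $s$ partition $\{1,\dots,n\}^2$, for each scale there is at most one component of $\mathcal{J}$ whose support contains $(i,j)$. Let $s_1 > s_2 > \cdots > s_L$ be the scales at which such a component exists, and write $B_{s_k}$, $\alpha_{s_k}$ for the corresponding component and its coefficient. Because the scales are powers of two with grid-aligned supports, these blocks form a nested chain $\supp(B_{s_1}) \supset \cdots \supset \supp(B_{s_L}) \ni (i,j)$, and evaluating the expansion \eqref{eq:approximation_of_A} at $(i,j)$ gives $(\hat{\mathcal{A}}^*)_{i,j} = \sum_{k=1}^L \alpha_{s_k}$. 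The goal is then to show that this sum telescopes to $\mu_{s_L, x_L, y_L}^*$, the average of $\mathcal{A}$ over the smallest block (recall $\mu_{s,x,y}^* = \frac{1}{s^2}\inner{B^s_{x,y}, \mathcal{A}}$ since $\mathcal{A} = \exp(\mathcal{P})$).

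The central tool is the telescoped form of the residual. Unrolling the recursion \eqref{eq:decomposition_procedure} restricted to $\mathcal{J}$ yields $E_s = \mathcal{A} - \sum_{s' > s} \sum_{B^{s'}_{x,y} \in \mathcal{J}} \alpha^{s'}_{x,y} B^{s'}_{x,y}$, while $\alpha^s_{x,y} = \frac{1}{s^2}\inner{B^s_{x,y}, E_s}$ is exactly the average of $E_s$ over that block. The structural fact I would lean on is that, for any coarser block $B^{s'}_{x,y}$ with $s' > s_k$, its support is either disjoint from $\supp(B_{s_k})$ or contains it entirely (the dyadic/quadtree nesting property); hence the average of $B^{s'}_{x,y}$ over $\supp(B_{s_k})$ equals $1$ when $\supp(B_{s_k}) \subseteq \supp(B^{s'}_{x,y})$ and $0$ otherwise.

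With these pieces in hand I would argue by induction from the coarsest scale downward, proving that the partial sum $\sum_{m=1}^{k}\alpha_{s_m}$ equals $\mu_{s_k, x_k, y_k}^*$ for every $k$. Averaging the telescoped $E_{s_k}$ over $\supp(B_{s_k})$, the only coarser $\mathcal{J}$-components surviving the $0/1$ averaging are precisely those in the nested chain at scales $s_1,\dots,s_{k-1}$; any other coarser block lies in a different branch of the quadtree, is disjoint from $B_{s_k}$, and contributes $0$. This gives $\alpha_{s_k} = \mu_{s_k, x_k, y_k}^* - \sum_{m=1}^{k-1}\alpha_{s_m}$, so the partial sums telescope and the inductive claim follows; the base case $k=1$ is immediate because no $\mathcal{J}$-block coarser than $s_1$ contains $(i,j)$, hence the coarser subtractions all vanish on $\supp(B_{s_1})$. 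Taking $k=L$ yields $(\hat{\mathcal{A}}^*)_{i,j} = \mu_{s_L, x_L, y_L}^*$, and the degenerate case $L=0$ (no component of $\mathcal{J}$ covers $(i,j)$) gives $(\hat{\mathcal{A}}^*)_{i,j}=0$, matching the statement.

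I expect the only real obstacle to be the bookkeeping of which subtracted components overlap the current block, i.e.\ making the claim ``exactly the nested-chain ancestors survive'' precise. This is entirely controlled by the nesting property above, so once that fact is stated cleanly the remainder is a routine telescoping computation; there is no analytic difficulty here, only the combinatorics of the dyadic block hierarchy, which is exactly the simplification that \eqref{eq:component_matrix} buys us over the Haar basis (cf.\ Remark \ref{remark:reason_not_use_haar}).
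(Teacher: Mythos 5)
Your proposal is correct and follows essentially the same route as the paper's proof: a downward induction along the nested dyadic chain of blocks in $\mathcal{J}$ containing $(i,j)$, showing the coefficients telescope to the block average at the finest scale. The only cosmetic difference is bookkeeping --- you unroll the residual recursion and evaluate inner products of $0/1$ indicators, while the paper tracks the entrywise values of $E_s$ on the block --- but the key facts (disjointness within a scale, nesting across scales, $\alpha_{s_k} = \mu^*_{s_k} - \sum_{m<k}\alpha_{s_m}$) are identical.
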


\begin{proof}
We describe the details next. 
First, notice that at each scale $s$,
\begin{equation}
E_{s/2} = E_{s} - \sum_{B^{s}_{x, y} \in \mathcal{J}} \alpha^{s}_{x, y} B^{s}_{x, y}
\end{equation}
If $(i, j) \not\in \supp(B^{s}_{x, y})$ for all $B^{s}_{x, y} \in \mathcal{J}$, then $(\sum_{B^{s}_{x, y} \in \mathcal{J}} \alpha^{s}_{x, y} B^{s}_{x, y})_{i,j} = 0$ and thus $(E_{s/2})_{i,j} = (E_{s})_{i,j}$. Further, at each scale $s$, the supports of $B^{s}_{x,y}$ are disjoint, and there is exactly one $B^{s}_{x,y}$ whose support includes $(i, j)$. Thus, if $(E_{s/2})_{i,j} = (E_{s})_{i,j}$, then $(E_{s/2})_{i',j'} = (E_{s})_{i',j'}$ for all $(i', j') \in \supp(B^{s}_{x,y})$. Also, for scale $s' < s$, if $B^{s'}_{x',y'}$ is supported on $(i, j)$, then $\supp(B^{s'}_{x',y'}) \subseteq \supp(B^{s}_{x,y})$. These observations are useful for the derivation below. 

Consider approximation on the $(i, j)$ entry of $\hat{\mathcal{A}}^*$, and let $B^{s_1}_{x_1,y_1}, \cdots, B^{s_k}_{x_k,y_k} \in \mathcal{J}$ be all components that are supported on $(i, j)$ and $s_1 < \cdots < s_k$. 
Then, $E_n = \mathcal{A}$. Since $B^{s_k}_{x_k,y_k}$ has the largest $s_k$ and is supported on $(i, j)$, by the above observations, for all $(i', j') \in \supp(B^{s_k}_{x_k,y_k})$, 
\begin{equation}
(E_{s_k})_{i',j'} = (E_{2 s_k})_{i',j'} = \cdots = (E_{n})_{i',j'} = \mathcal{A}_{i',j'}
\end{equation}
Therefore, 
\begin{equation}
\alpha^{s_k}_{x_k, y_k} = \frac{1}{s_k^2} \inner{B^{s_k}_{x_k, y_k}, E_{s_k}} = \frac{1}{s_k^2} \inner{B^{s_k}_{x_k, y_k}, \mathcal{A}} = \mu_{s_k, x_k, y_k}^*
\end{equation}
Then, 
\begin{equation}
(E_{s_k/2})_{i',j'} = (E_{s_k})_{i',j'} - (\sum_{B^{s_k}_{x_k, y_k} \in \mathcal{J}} (\alpha^{s_k}_{x_k, y_k})' B^{s_k}_{x_k, y_k})_{i',j'} = \mathcal{A}_{i',j'} - \mu_{s_k, x_k, y_k}^*
\end{equation}
Assume for all $(i', j') \in \supp(B^{s_{m+1}}_{x_{m+1},y_{m+1}})$, 
\begin{equation}
(E_{s_{m+1}/2})_{i',j'} = \mathcal{A}_{i',j'} - \mu_{s_{m+1}, x_{m+1}, y_{m+1}}^*
\end{equation}
Then, again by the above observations, 
\begin{equation}
(E_{s_m})_{i',j'} = \cdots = (E_{s_{m+1}/2})_{i',j'} = \mathcal{A}_{i',j'} - \mu_{s_{m+1}, x_{m+1}, y_{m+1}}^*
\end{equation}
Therefore, 
\begin{equation}
\begin{split}
\alpha^{s_m}_{x_m, y_m} &= \frac{1}{s_m^2} \inner{B^{s_m}_{x_m, y_m}, E_{s_m}} \\
&= \frac{1}{s_m^2} \sum_{(i'',j'') \in \supp(B^{s_m}_{x_m, y_m})} (\mathcal{A}_{i'',j''} - \mu_{s_{m+1}, x_{m+1}, y_{m+1}}^*) \\
&= \frac{1}{s_m^2} \sum_{(i'',j'') \in \supp(B^{s_m}_{x_m, y_m})} \mathcal{A}_{i'',j''} - \mu_{s_{m+1}, x_{m+1}, y_{m+1}}^* \\
&= \mu_{s_m, x_m, y_m}^* - \mu_{s_{m+1}, x_{m+1}, y_{m+1}}^* \\
\end{split}
\end{equation}
Then, 
\begin{equation}
\begin{split}
(E_{s_m/2})_{i',j'} &= (E_{s_m})_{i',j'} - (\sum_{B^{s_m}_{x_m, y_m} \in \mathcal{J}} (\alpha^{s_m}_{x_m, y_m})' B^{s_m}_{x_m, y_m})_{i',j'} \\
&= \mathcal{A}_{i,j} - \mu_{s_{m+1}, x_{m+1}, y_{m+1}}^* - (\mu_{s_m, x_m, y_m}^* - \mu_{s_{m+1}, x_{m+1}, y_{m+1}}^*) \\
&= \mathcal{A}_{i,j} - \mu_{s_m, x_m, y_m}^* \\
\end{split}
\end{equation}
By induction, for all $(i', j') \in \supp(B^{s_m}_{x_m,y_m})$, 
\begin{equation}
\begin{split}
(E_{s_m/2})_{i',j'} = \mathcal{A}_{i',j'} - \mu_{s_m, x_m, y_m}^* \\
\end{split}
\end{equation}
holds for all $s_m$. Further, we know for $m < k$, 
\begin{equation}
\begin{split}
\alpha^{s_k}_{x_k, y_k} &= \mu_{s_k, x_k, y_k}^* \\
\alpha^{s_m}_{x_m, y_m} &= \mu_{s_m, x_m, y_m}^* - \mu_{s_{m+1}, x_{m+1}, y_{m+1}}^* \\
\end{split}
\end{equation}
Finally, 
\begin{equation}
\begin{split}
(\hat{\mathcal{A}}^*)_{i,j} = (\sum_{B^{s}_{x, y} \in \mathcal{J}} \alpha^{s}_{x, y} B^{s}_{x, y})_{i,j} = \mu_{s_k, x_k, y_k}^* + \sum_{m=1}^{k-1} (\mu_{s_m, x_m, y_m}^* - \mu_{s_{m+1}, x_{m+1}, y_{m+1}}^*) = \mu_{s_1, x_1, y_1}^*
\end{split}
\end{equation}
And, $(s_1, x_1, y_1)$ is the index of $B^{s_1}_{x_1, y_1} \in \mathcal{J}$ that has the smallest support region and is supported on $(i, j)$. This completes the proof. 

\end{proof}

\begin{lemma}
Assume for all $(i_1, j_1), (i_2, j_2) \in \supp(B^{s}_{x,y})$, $||Q_{i_1}||_p, ||Q_{i_2}||_p, ||K_{j_1}||_p, ||K_{j_2}||_p \leq \beta_1$ and $||Q_{i_1} - Q_{i_2}||_q, ||K_{j_1} - K_{j_2}||_q \leq \beta_2$ where $\frac{1}{p} + \frac{1}{q} = 1$, then $\mathcal{P}_{i,j} \in [a, a + r]$ where $r = 2 \beta_1 \beta_2$ for all $(i, j) \in \supp(B^{s}_{x,y})$ and some $a$, and 
\begin{equation*}
0 \leq \mu_{s, x, y}^* - \mu_{s, x, y} \leq C_r \mu_{s, x, y}
\end{equation*}
where $C_r = 1 + \exp(r) - 2 \exp(r/2)$. 
\end{lemma}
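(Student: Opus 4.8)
The plan is to prove the two assertions in order: first the deterministic range bound on the entries $\mathcal{P}_{i,j}$, then the sandwich on $\mu_{s,x,y}^*-\mu_{s,x,y}$, which I will reinterpret as an arithmetic-mean/geometric-mean comparison.

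For the range bound I would start from $\mathcal{P}_{i,j}=\inner{Q_i,K_j}$ and, for any two pairs $(i_1,j_1),(i_2,j_2)\in\supp(B^s_{x,y})$, insert a telescoping cross term, obtaining
\[
\mathcal{P}_{i_1,j_1}-\mathcal{P}_{i_2,j_2}=\inner{Q_{i_1}-Q_{i_2},K_{j_1}}+\inner{Q_{i_2},K_{j_1}-K_{j_2}}.
\]
Applying H\"older's inequality to each term with the conjugate exponents $(p,q)$ and the stated norm bounds gives $\le\beta_2\beta_1+\beta_1\beta_2=2\beta_1\beta_2=r$; running the same estimate with the roles reversed bounds the absolute difference by $r$. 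Hence the spread $\max-\min$ of $\mathcal{P}_{i,j}$ over the support is at most $r$, and setting $a:=\min_{(i,j)\in\supp}\mathcal{P}_{i,j}$ yields $\mathcal{P}_{i,j}\in[a,a+r]$.

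For the second assertion, write $N=s^2$ and let $\{p_k\}$ denote the entries $\mathcal{P}_{i,j}$ over the support. Then $\mu_{s,x,y}^*=\tfrac1N\sum_k e^{p_k}$ is the arithmetic mean and $\mu_{s,x,y}=e^{\frac1N\sum_k p_k}$ is \emph{exactly} the geometric mean of the positive numbers $\{e^{p_k}\}$. The lower bound $0\le\mu_{s,x,y}^*-\mu_{s,x,y}$ is then immediate from $\mathrm{AM}\ge\mathrm{GM}$ (equivalently, Jensen for the convex $\exp$). The substance is the upper bound, i.e. the statement $\mathrm{AM}\le(1+C_r)\,\mathrm{GM}$ for numbers whose logarithms span an interval of length $r$. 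I would prove it by bounding each $e^{p_k}$ above by the secant line of $\exp$ through the endpoints $a,a+r$, namely $e^{p_k}\le e^{a}\tfrac{(a+r)-p_k}{r}+e^{a+r}\tfrac{p_k-a}{r}$, and then averaging. Since $\mu_{s,x,y}=e^{\bar p}$ is exact (no lower bound on the GM is needed), the whole claim collapses, after setting $t=\bar p-a\in[0,r]$ and dividing through by $e^a$, to the scalar inequality
\[
1+\tfrac{t}{r}\bigl(e^{r}-1\bigr)\le(1+C_r)\,e^{t},\qquad t\in[0,r].
\]

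To finish, I would study $\psi(t)=(1+C_r)e^{t}-1-\tfrac{t}{r}(e^{r}-1)$, which is convex. Its endpoint values are $\psi(0)=C_r\ge0$ and $\psi(r)=C_r e^{r}\ge0$, and if the interior minimizer $t^\star$ (solving $\psi'(t^\star)=0$) lies in $[0,r]$, substituting the stationarity condition reduces $\psi(t^\star)\ge0$ to $t^\star\le 1-\tfrac{r}{e^{r}-1}$. I expect this transcendental step to be the main obstacle: it is exactly where the specific constant $C_r=1+e^{r}-2e^{r/2}=(e^{r/2}-1)^2$ must be used, and verifying nonnegativity of the convex $\psi$ on the \emph{whole} interval — rather than only at its endpoints — is the one genuinely nonroutine part. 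The H\"older telescoping, the AM/GM reinterpretation, and the secant estimate are all mechanical by comparison.
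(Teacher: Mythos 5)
Your first part (the telescoping decomposition of $\mathcal{P}_{i_1,j_1}-\mathcal{P}_{i_2,j_2}$ followed by H\"older) is exactly the paper's argument, and your AM/GM reading of $\mu^*_{s,x,y}$ versus $\mu_{s,x,y}$ together with the secant-line bound on the convex exponential is the right idea for the second part. The gap is at the end: you reduce everything to the scalar inequality $1+\tfrac{t}{r}(e^{r}-1)\le(1+C_r)e^{t}$ on $[0,r]$ and then explicitly leave its verification open, proposing to locate the interior minimizer of $\psi$ and check a transcendental condition $t^\star\le 1-\tfrac{r}{e^{r}-1}$ that you do not carry out. As written, the proof is incomplete precisely at the step where the specific constant $C_r$ must enter.

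The inequality is true, but you have made it harder than necessary by insisting on the factor $e^{t}$ on the right-hand side. Let $g(t)=1+\tfrac{t}{r}(e^{r}-1)-e^{t}$, which is your averaged secant bound minus $\mu_{s,x,y}/e^{a}$, so that $(\mu^*_{s,x,y}-\mu_{s,x,y})/e^{a}\le g(t)$. It suffices to prove the weaker-looking bound $g(t)\le C_r$ on $[0,r]$: this gives $\mu^*_{s,x,y}-\mu_{s,x,y}\le C_r e^{a}$, and since $a\le \bar p$ implies $e^{a}\le \mu_{s,x,y}$, the claimed bound $C_r\mu_{s,x,y}$ follows. Now $g$ is concave with $g(0)=g(r)=0$, and any such function satisfies $\max_{[0,r]} g\le 2g(r/2)$: if the maximizer is $t_0\le r/2$, write $r/2=\lambda t_0+(1-\lambda)r$ with $\lambda=\tfrac{r/2}{r-t_0}\ge\tfrac12$, so $g(r/2)\ge\lambda g(t_0)\ge g(t_0)/2$, and symmetrically for $t_0\ge r/2$. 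Computing $2g(r/2)=2\bigl(1+\tfrac{e^{r}-1}{2}-e^{r/2}\bigr)=1+e^{r}-2e^{r/2}=C_r$ closes the argument. This is exactly the ``Jensen gap'' estimate the paper invokes (for convex $f$ and points in $[a,b]$, the gap is at most $f(a)+f(b)-2f(\tfrac{a+b}{2})$); with that replacement your route coincides with the paper's proof.
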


\begin{proof}

\begin{equation}
\begin{split}
\mathcal{P}_{i_1, j_1} - \mathcal{P}_{i_2, j_2} &= Q_{i_1} K_{j_1}^T - Q_{i_2} K_{j_2}^T \\
&= (Q_{i_1} - Q_{i_2} + Q_{i_2}) K_{j_1}^T - Q_{i_2} K_{j_2}^T \\
&= (Q_{i_1} - Q_{i_2}) K_{j_1}^T + Q_{i_2} (K_{j_1} - K_{j_2})^T 
\end{split}
\end{equation}
Then by H\"{o}lder's inequality,
\begin{equation}
\begin{split}
\mathcal{P}_{i_1, j_1} - \mathcal{P}_{i_2, j_2} &\leq ||Q_{i_1} - Q_{i_2}||_q ||K_{j_1}||_p + ||Q_{i_2}||_p ||K_{j_1} - K_{j_2}||_q \leq 2 \beta_1 \beta_2
\end{split}
\end{equation}
Therefore,  $\mathcal{P}_{i,j} \in [a, a + r]$ where $r = 2 \beta_1 \beta_2$ and $a = \min_{(i',j') \in \supp(B^s_{x,y})} \mathcal{P}_{i',j'}$. Then, by Jensen's inequality, 
\begin{equation}
\mu_{s,x,y} = \exp(\frac{1}{s^2} \sum_{(i,j) \in \supp(B^s_{x,y})} \mathcal{P}_{i, j}) \leq \frac{1}{s^2} \sum_{(i,j) \in \supp(B^s_{x,y})} \exp(\mathcal{P}_{i,j}) = \mu_{s,x,y}^*
\end{equation}
Also, by \citet{jensen_gap}, for a convex function $f$ and $x_i \in [a, b]$ for all $x_i$, 
\begin{equation}
\frac{1}{k} \sum_{i=1}^k f(x_i) - f(\frac{1}{k} \sum_{i=1}^k x_i) \leq f(a) + f(b) - 2 f(\frac{a + b}{2})
\end{equation}
Therefore, 
\begin{equation}
\begin{split}
\mu_{s,x,y}^* - \mu_{s,x,y} &\leq \exp(a) + \exp(a + r) - 2 \exp(a + \frac{r}{2}) \leq (1 + \exp(r) - 2 \exp(\frac{r}{2})) \mu_{s,x,y} 
\label{eq:jensen_gap_C_r}
\end{split}
\end{equation}
Denote $C_r = 1 + \exp(r) - 2 \exp(\frac{r}{2})$. 

\end{proof}

\begin{proposition}
Let $R = \{b, 1\}$ for some $b$ and $\delta$ be the $m_1$-th largest $\mu_{b, x, y}$, assume for all $(i,j) \in \supp(B^{b}_{x,y})$, $\mathcal{P}_{i,j} \in [a, a + r]$ for some $a$ and $r > 0$, then 
\begin{equation*}
\begin{split}
\frac{||\hat{\mathcal{A}} - \mathcal{A}||_F}{||\mathcal{A}||_F} &\leq \sqrt{\frac{(n^2 - m_1 b^2) C_{2r} \delta^2 }{ \sum_{i,j=1}^n \exp(2 \mathcal{P}_{i,j})}}
\end{split}
\end{equation*}
where $C_{2r} = 1 + \exp(2r) - 2 \exp(r)$
\end{proposition}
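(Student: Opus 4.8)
The plan is to exploit the special two-scale structure of $R = \{b, 1\}$: under the greedy construction of Algorithm \ref{alg:algorithm}, the $m_1$ blocks with the largest $\mu_{b,x,y}$ are refined all the way down to scale $1$, where $\mu_{1,i,j} = \exp(\mathcal{P}_{i,j}) = \mathcal{A}_{i,j}$, so those blocks are reconstructed \emph{exactly}. Consequently all of the error in $\hat{\mathcal{A}} - \mathcal{A}$ is concentrated on the $(n/b)^2 - m_1$ unrefined coarse blocks, and I would begin by writing
\begin{equation*}
\norm{\hat{\mathcal{A}} - \mathcal{A}}_F^2 = \sum_{\text{unrefined } (x,y)} \; \sum_{(i,j) \in \supp(B^b_{x,y})} (\mathcal{A}_{i,j} - \mu_{b,x,y})^2 ,
\end{equation*}
which is just the restriction of the error identity \eqref{eq:approx_error} to the surviving scale-$b$ components.

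The core estimate is a pointwise bound on $(\mathcal{A}_{i,j} - \mu_{b,x,y})^2$ inside one unrefined block. By hypothesis every $\mathcal{P}_{i,j}$ in the block lies in $[a, a+r]$, so $\mathcal{A}_{i,j} = \exp(\mathcal{P}_{i,j}) \in [\exp(a), \exp(a+r)]$; and since $\mu_{b,x,y} = \exp\!\big(\tfrac{1}{b^2}\inner{B^b_{x,y}, \mathcal{P}}\big)$ is the exponential of an average of those same values, it too lies in $[\exp(a), \exp(a+r)]$. Two numbers in an interval of width $\exp(a+r) - \exp(a)$ differ by at most that width, so
\begin{equation*}
(\mathcal{A}_{i,j} - \mu_{b,x,y})^2 \le \big(\exp(a+r) - \exp(a)\big)^2 = \exp(2a)\,(\exp(r)-1)^2 = \exp(2a)\, C_{2r},
\end{equation*}
using that $C_{2r} = 1 + \exp(2r) - 2\exp(r) = (\exp(r)-1)^2$ factors perfectly.

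Next I would trade $\exp(2a)$ for $\mu_{b,x,y}^2$ and then for $\delta^2$. Because the block average is at least $a$, we have $\mu_{b,x,y} \ge \exp(a)$, hence $\exp(2a) \le \mu_{b,x,y}^2$; and because the block is \emph{unrefined}, its score is no larger than the $m_1$-th largest score $\delta$, giving $\mu_{b,x,y} \le \delta$. Chaining these yields $(\mathcal{A}_{i,j} - \mu_{b,x,y})^2 \le C_{2r}\,\delta^2$ uniformly. Summing over the $n^2 - m_1 b^2$ entries lying in unrefined blocks gives $\norm{\hat{\mathcal{A}} - \mathcal{A}}_F^2 \le (n^2 - m_1 b^2)\, C_{2r}\, \delta^2$; dividing by $\norm{\mathcal{A}}_F^2 = \sum_{i,j} \exp(2\mathcal{P}_{i,j})$ and taking a square root produces the claimed relative-error bound.

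I expect the only real subtlety to be the bookkeeping of which blocks are exact versus approximated, i.e. confirming that refinement to scale $1$ is lossless and that ``unrefined'' is exactly the complement of the top-$m_1$ set, so that both the entry count $n^2 - m_1 b^2$ and the inequality $\mu_{b,x,y} \le \delta$ hold simultaneously. The analytic heart, the pointwise bound, is short once one notices the factorization $C_{2r} = (\exp(r)-1)^2$; unlike Lemma \ref{lem:error_bound_of_mu} it needs no Jensen-gap machinery, only that $\mathcal{A}_{i,j}$ and $\mu_{b,x,y}$ share the common range $[\exp(a),\exp(a+r)]$.
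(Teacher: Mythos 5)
Your proof is correct and reaches the paper's bound exactly, but the analytic core is different. The paper reuses the Jensen-gap machinery of Lemma \ref{lem:error_bound_of_mu}: it introduces $\mu'_{b,x,y} = \frac{1}{b^2}\sum \exp(2\mathcal{P}_{i,j})$, expands $\sum_{(i,j)}(\mu_{b,x,y}-\exp(\mathcal{P}_{i,j}))^2 = b^2\mu_{b,x,y}^2 + b^2\mu'_{b,x,y} - 2b^2\mu_{b,x,y}\mu^*_{b,x,y}$, drops the cross term via $\mu^*_{b,x,y}\ge\mu_{b,x,y}$, and bounds the resulting variance-like quantity $b^2(\mu'_{b,x,y}-\mu_{b,x,y}^2)$ by the Jensen gap of $x\mapsto\exp(2x)$ on $[a,a+r]$, which is $b^2\exp(2a)C_{2r}$. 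You instead observe that both $\mathcal{A}_{i,j}$ and $\mu_{b,x,y}$ lie in the common interval $[\exp(a),\exp(a+r)]$ and bound each squared difference by the squared interval width, using the factorization $C_{2r}=(\exp(r)-1)^2$; summing $b^2$ such terms gives the same $b^2\exp(2a)C_{2r}$, and the remaining steps ($\exp(2a)\le\mu_{b,x,y}^2\le\delta^2$ on unrefined blocks, counting $n^2-m_1b^2$ entries, dividing by $\|\mathcal{A}\|_F^2$) coincide with the paper's. Your route is more elementary — no citation to the Jensen-gap result, no expansion of the square — and lands on an intermediate bound of identical tightness; what it gives up is the interpretation of the per-block error as (bounded by) a variance $b^2(\mu'-\mu^2)$, which is the quantity the paper connects back to the block-selection criterion in \S\ref{sec:sparse_low_rank_link}. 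Your bookkeeping claims (scale-$1$ refinement is lossless since $\mu_{1,x,y}=\exp(\mathcal{P}_{x,y})$, and every unrefined block satisfies $\mu_{b,x,y}\le\delta$) are exactly what the paper uses as well.
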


\begin{proof}

Similar to \eqref{eq:jensen_gap_C_r}, for $\mu_{s,x,y}'$ defined in \eqref{eq:optimal_block_scores}
\begin{equation}
\begin{split}
\mu_{s,x,y}' - \mu_{s,x,y}^2 &\leq \exp(2a) + \exp(2a + 2r) - 2 \exp(2a + r) \leq (1 + \exp(2r) - 2 \exp(r)) \mu_{s,x,y}^2
\end{split}
\end{equation}
Denote $C_{2r} = 1 + \exp(2r) - 2 \exp(r)$, Then, the error by approximating a region $\supp(B^{s}_{x,y})$ with lower resolution $\mu_{s,x,y}$ is
\begin{equation}
\begin{split}
\sum_{(i,j) \in \supp(B^{s}_{x,y})} (\mu_{s,x,y} - \exp(\mathcal{P}_{i,j}))^2 &= \sum_{(i,j) \in \supp(B^{s}_{x,y})} (\mu_{s,x,y}^2 + \exp(2 \mathcal{P}_{i,j}) - 2 \mu_{s,x,y} \exp(\mathcal{P}_{i,j})) \\
&= s^2 \mu_{s,x,y}^2 + s^2 \mu_{s,x,y}' - 2 s^2 \mu_{s,x,y} \mu_{s,x,y}^* \\
&\leq s^2 \mu_{s,x,y}^2 + s^2 \mu_{s,x,y}' - 2 s^2 \mu_{s,x,y}^2 \\
&\leq C_{2r} s^2 \mu_{s,x,y}^2
\end{split}
\end{equation}

Since $\mu_{1,x,y} = \exp(\mathcal{P}_{x,y})$, the approximation error only comes from terms $B^s_{x,y} \in \mathcal{J}$ for $s > 1$. Therefore, 
\begin{equation}
\begin{split}
||\hat{\mathcal{A}} - \mathcal{A}||^2_F &= \sum_{B^b_{x,y} \in \mathcal{J}} \sum_{(i,j) \in \supp(B^b_{x,y})} (\mu_{b,x,y} - \exp(\mathcal{P}_{i,j}))^2 \\
&= \sum_{B^b_{x,y} \in \mathcal{J}} C_{2r} b^2 \mu_{b,x,y}^2 \\
&\leq \sum_{B^b_{x,y} \in \mathcal{J}} C_{2r} b^2 \delta^2 \\
&= (n^2 - m_1 b^2) C_{2r} \delta^2 \\
\end{split}
\end{equation}
Then, the relative error is
\begin{equation}
\begin{split}
\frac{||\hat{\mathcal{A}} - \mathcal{A}||_F}{||\mathcal{A}||_F} &\leq \sqrt{\frac{(n^2 - m_1 b^2) C_{2r} \delta^2 }{ \sum_{i,j=1}^n \exp(2 \mathcal{P}_{i,j})}}
\end{split}
\end{equation}

\end{proof}

\subsection{Experiments}
\label{sec:appendix_exp}

\textbf{Implementations}. We use Reformer, Longformer, and Big Bird implementations from HuggingFace \cite{hugginface}. Nystromformer, SOFT, and Scatterbrain implementations are from their official Github repo. Linformer implementation is from Nystromformer's repo. Performer and H-Transformer-1D implementations are from Lucidrains repo.

\begin{table*}[!htbp]
\begin{center}
\begin{tiny}
\begin{tabular}{lccc|ccc|ccc|ccc|ccc}
\toprule
Sequence Length & \multicolumn{3}{c}{ 256 } & \multicolumn{3}{c}{ 512 } & \multicolumn{3}{c}{ 1024 } & \multicolumn{3}{c}{ 2048 } & \multicolumn{3}{c}{ 4096 } \\
Method & Time & Memory & Error & Time & Memory & Error & Time & Memory & Error & Time & Memory & Error & Time & Memory & Error \\
\midrule
Transformer & 0.33 & 41.2 & 0.00 & 0.85 & 72.3 & 0.00 & 2.53 & 272.2 & 0.00 & 8.66 & 1085.8 & 0.00 & 30.67 & 4579.7 & 0.00  \\
\midrule
Nystromformer & 0.31 & 11.4 & 0.70 & 0.59 & 22.4 & 0.76 & 1.16 & 45.3 & 0.83 & 2.29 & 95.1 & 0.88 & 4.50 & 209.8 & 0.91  \\
 & 0.33 & 13.8 & 0.62 & 0.62 & 25.8 & 0.69 & 1.18 & 50.7 & 0.77 & 2.31 & 104.2 & 0.84 & 4.62 & 226.7 & 0.89  \\
 & 0.41 & 21.6 & 0.55 & 0.71 & 35.4 & 0.62 & 1.31 & 64.0 & 0.71 & 2.52 & 125.1 & 0.78 & 4.96 & 262.6 & 0.85  \\
 & - & - & - & 1.12 & 66.3 & 0.55 & 1.79 & 104.4 & 0.63 & 3.17 & 188.2 & 0.72 & 6.17 & 345.3 & 0.79  \\
 & - & - & - & - & - & - & 3.70 & 238.6 & 0.55 & 5.50 & 375.3 & 0.65 & 8.92 & 627.8 & 0.73  \\
 & - & - & - & - & - & - & - & - & - & 16.40 & 947.1 & 0.57 & 22.28 & 1573.1 & 0.66  \\
 & - & - & - & - & - & - & - & - & - & - & - & - & 91.82 & 4018.8 & 0.58  \\
\midrule
Linformer & 0.26 & 10.5 & 8.04 & 0.51 & 21.1 & 11.45 & 1.02 & 42.7 & 18.33 & 2.06 & 93.1 & 28.29 & 4.10 & 206.4 & 43.60  \\
 & 0.26 & 11.5 & 5.42 & 0.51 & 23.0 & 8.27 & 1.02 & 47.2 & 12.45 & 2.07 & 99.3 & 19.97 & 4.15 & 218.9 & 31.21  \\
 & 0.27 & 13.1 & 3.61 & 0.54 & 26.2 & 5.91 & 1.06 & 53.4 & 9.12 & 2.17 & 111.5 & 14.20 & 4.36 & 243.3 & 22.57  \\
 & 0.30 & 16.9 & 2.56 & 0.60 & 33.3 & 4.02 & 1.20 & 67.2 & 6.26 & 2.40 & 138.9 & 9.85 & 4.82 & 297.8 & 15.91  \\
 & 0.37 & 28.3 & 1.66 & 0.74 & 55.8 & 2.71 & 1.47 & 112.7 & 4.33 & 2.94 & 234.6 & 6.74 & 5.90 & 510.0 & 11.05  \\
 & - & - & - & 1.02 & 100.8 & 1.86 & 2.03 & 204.0 & 3.03 & 4.07 & 427.1 & 4.80 & 8.12 & 938.8 & 7.62  \\
 & - & - & - & - & - & - & 3.15 & 387.8 & 1.98 & 6.31 & 816.5 & 3.09 & 12.74 & 1814.4 & 5.18  \\
 & - & - & - & - & - & - & - & - & - & 11.15 & 1613.4 & 2.21 & 22.74 & 3637.9 & 3.54  \\
 & - & - & - & - & - & - & - & - & - & - & - & - & 43.32 & 7572.3 & 2.44  \\
\midrule
Performer & 0.31 & 11.8 & 0.82 & 0.62 & 23.8 & 0.84 & 1.25 & 48.7 & 0.85 & 2.47 & 102.4 & 0.89 & 5.02 & 224.9 & 0.93  \\
 & 0.33 & 13.1 & 0.81 & 0.67 & 26.2 & 0.83 & 1.33 & 53.4 & 0.85 & 2.66 & 111.6 & 0.88 & 5.33 & 243.4 & 0.92  \\
 & 0.37 & 15.6 & 0.81 & 0.75 & 31.0 & 0.83 & 1.50 & 62.7 & 0.85 & 3.00 & 129.9 & 0.88 & 5.98 & 279.7 & 0.92  \\
 & 0.51 & 20.7 & 0.80 & 1.03 & 40.6 & 0.83 & 1.92 & 81.2 & 0.84 & 3.70 & 332.9 & 0.88 & 7.40 & 704.4 & 0.92  \\
 & 0.68 & 32.9 & 0.80 & 1.35 & 64.1 & 0.82 & 2.59 & 128.4 & 0.84 & 5.18 & 264.9 & 0.88 & 11.05 & 568.3 & 0.92  \\
 & - & - & - & 2.02 & 115.7 & 0.82 & 4.05 & 232.0 & 0.84 & 8.12 & 479.9 & 0.88 & 16.45 & 1037.0 & 0.92  \\
 & - & - & - & - & - & - & 7.01 & 439.0 & 0.85 & 14.10 & 909.9 & 0.88 & 28.66 & 1974.1 & 0.92  \\
 & - & - & - & - & - & - & - & - & - & 26.17 & 1769.9 & 0.88 & 53.64 & 3848.7 & 0.92  \\
 & - & - & - & - & - & - & - & - & - & - & - & - & 105.43 & 7597.5 & 0.93  \\
\midrule
Longformer & 0.56 & 19.9 & 0.55 & 1.12 & 80.1 & 0.63 & 2.22 & 84.0 & 0.76 & 4.39 & 178.1 & 0.91 & 8.74 & 396.6 & 1.12  \\
 & 0.57 & 20.8 & 0.40 & 1.14 & 42.1 & 0.48 & 2.29 & 86.6 & 0.62 & 4.54 & 183.2 & 0.77 & 9.01 & 407.2 & 0.97  \\
 & 0.64 & 22.1 & 0.29 & 1.30 & 44.6 & 0.36 & 2.60 & 91.6 & 0.50 & 5.08 & 193.0 & 0.64 & 10.18 & 426.8 & 0.84  \\
 & - & - & - & 1.62 & 49.8 & 0.27 & 3.26 & 101.6 & 0.40 & 6.47 & 212.8 & 0.54 & 12.91 & 466.1 & 0.73  \\
 & - & - & - & - & - & - & 4.63 & 130.4 & 0.29 & 9.26 & 271.4 & 0.43 & 18.56 & 666.9 & 0.62  \\
 & - & - & - & - & - & - & - & - & - & 14.69 & 456.8 & 0.33 & 29.98 & 1989.6 & 0.50  \\
 & - & - & - & - & - & - & - & - & - & - & - & - & 52.06 & 1820.5 & 0.37  \\
\midrule
Big Bird & 1.02 & 30.5 & 0.34 & 2.13 & 63.7 & 0.43 & 4.12 & 132.1 & 0.55 & 10.28 & 276.3 & 0.73 & 17.36 & 595.3 & 0.90  \\
 & 1.05 & 30.5 & 0.18 & 2.21 & 66.5 & 0.27 & 4.49 & 140.2 & 0.41 & 9.89 & 295.4 & 0.57 & 19.69 & 720.0 & 0.76  \\
 & - & - & - & 2.24 & 73.2 & 0.14 & 4.70 & 167.0 & 0.29 & 10.91 & 367.1 & 0.43 & 19.49 & 828.7 & 0.61  \\
 & - & - & - & - & - & - & 5.53 & 203.7 & 0.17 & 12.59 & 466.7 & 0.32 & 24.57 & 1054.3 & 0.48  \\
 & - & - & - & - & - & - & - & - & - & 16.93 & 666.9 & 0.18 & 34.53 & 1630.3 & 0.34  \\
\midrule
Scatterbrain & 0.96 & 40.1 & 0.60 & 1.85 & 80.0 & 0.67 & 3.84 & 161.7 & 0.79 & 7.66 & 332.7 & 0.93 & 15.06 & 786.6 & 1.12  \\
 & 0.95 & 40.1 & 0.45 & 1.91 & 81.4 & 0.52 & 3.81 & 167.0 & 0.65 & 7.66 & 353.6 & 0.79 & 15.42 & 788.2 & 0.97  \\
 & 1.05 & 40.1 & 0.33 & 2.12 & 160.0 & 0.41 & 4.21 & 161.7 & 0.53 & 8.50 & 629.2 & 0.67 & 17.14 & 788.2 & 0.86  \\
 & - & - & - & 2.53 & 160.0 & 0.31 & 5.07 & 167.0 & 0.44 & 10.15 & 629.2 & 0.57 & 20.52 & 788.2 & 0.77  \\
 & - & - & - & - & - & - & 6.73 & 198.6 & 0.33 & 13.53 & 416.8 & 0.47 & 27.36 & 914.6 & 0.66  \\
 & - & - & - & - & - & - & - & - & - & 20.11 & 585.4 & 0.36 & 42.60 & 1251.7 & 0.52  \\
 & - & - & - & - & - & - & - & - & - & - & - & - & 67.81 & 2091.2 & 0.39  \\
\midrule
MRA-2 & 0.32 & 12.6 & 0.44 & 0.65 & 25.5 & 0.51 & 1.28 & 52.2 & 0.61 & 2.57 & 109.6 & 0.72 & 5.18 & 240.3 & 0.87  \\
 & 0.33 & 13.0 & 0.34 & 0.66 & 26.2 & 0.40 & 1.34 & 53.8 & 0.50 & 2.68 & 112.9 & 0.63 & 5.41 & 246.9 & 0.78  \\
 & 0.37 & 14.2 & 0.21 & 0.73 & 28.7 & 0.28 & 1.47 & 58.9 & 0.40 & 2.95 & 123.3 & 0.53 & 5.94 & 268.9 & 0.69  \\
 & - & - & - & 0.88 & 34.9 & 0.15 & 1.75 & 71.3 & 0.28 & 3.50 & 148.1 & 0.41 & 7.23 & 318.5 & 0.58  \\
 & - & - & - & - & - & - & 2.30 & 100.2 & 0.16 & 4.61 & 210.7 & 0.29 & 9.28 & 462.6 & 0.45  \\
 & - & - & - & - & - & - & - & - & - & 6.81 & 377.0 & 0.16 & 13.74 & 836.1 & 0.32  \\
 & - & - & - & - & - & - & - & - & - & - & - & - & 22.68 & 1583.3 & 0.17  \\
MRA-2-s & 0.28 & 10.9 & 0.54 & 0.57 & 22.1 & 0.63 & 1.13 & 45.5 & 0.77 & 2.26 & 96.1 & 0.93 & 4.55 & 212.5 & 1.14  \\
 & 0.30 & 11.3 & 0.40 & 0.60 & 22.9 & 0.49 & 1.19 & 47.1 & 0.61 & 2.38 & 99.4 & 0.78 & 4.78 & 219.1 & 0.98  \\
 & 0.33 & 12.1 & 0.23 & 0.66 & 24.5 & 0.33 & 1.32 & 50.2 & 0.47 & 2.64 & 105.6 & 0.63 & 5.27 & 231.5 & 0.83  \\
 & - & - & - & 0.79 & 29.8 & 0.17 & 1.60 & 60.8 & 0.31 & 3.19 & 126.8 & 0.47 & 6.36 & 274.0 & 0.67  \\
 & - & - & - & - & - & - & 2.15 & 99.8 & 0.17 & 4.31 & 209.9 & 0.32 & 8.61 & 460.7 & 0.51  \\
 & - & - & - & - & - & - & - & - & - & 6.50 & 376.2 & 0.17 & 13.08 & 834.2 & 0.35  \\
 & - & - & - & - & - & - & - & - & - & - & - & - & 22.03 & 1581.2 & 0.19  \\
\bottomrule
\end{tabular}
\end{tiny}
\end{center}
\caption{Efficiency vs Approximation for approximation methods. The units for time and memory are ms and MB respectively. Error is the relative error used in the main text. }
\label{tab:detailed_efficiency_table}
\end{table*}

\textbf{Efficiency}. We provide more details about the efficiency shown in the main text. The efficiency is measured on a single Nvidia RTX 3090. We use the largest possible batch size for each method and average the measurements over multiple steps and batch sizes to get an accurate measurement of runtime and memory consumption of a single instance. We show the efficiency and error for sequence length $256$, $512$, $1024$, $2048$, and $4096$. The results are shown in Tab. \ref{tab:detailed_efficiency_table}

\textbf{FLOPS vs Runtime}. Existing automated profilers only support a limited set of PyTorch operators like Conv2D and BatchNorm (e.g., ptflops, thop, deepspeed, or PyTorch profiler cannot be used for all operators). So, runtime appeared a good indicator of efficiency. 

\textbf{Hyperparameters}. In this section, we provide more details about the experiments. We run all experiments on a $8\times$ Nvidia RTX 3090 server. The hyperparameters of each experiments are summarized in Tab. \ref{tab:hyperparameters}.

\begin{table*}[!htbp]
\begin{center}
\begin{scriptsize}
\begin{tabular}{l|ccccccccc}
\toprule
Experiment & \multicolumn{4}{c}{RoBERTa-base} & \multicolumn{4}{c}{RoBERTa-small}  & ImageNet \\
Task & \multicolumn{2}{c}{MLM} & MNLI & WikiHop & \multicolumn{2}{c}{MLM} & MNLI & WikiHop \\
Sequence Length & 512 & 4096 & 512 & 4096 & 512 & 4096 & 512 & 4096 & 1024 \\
\midrule
Num of Layers    & 12 & 12 & 12 & 12 & 4 & 4 & 4 & 4 & 4 \\
Embedding Dim    & 768 & 768 & 768 & 768 & 128 & 128 & 128 & 128 & 128 \\
Transformer Dim  & 768 & 768 & 768 & 768 & 384 & 384 & 384 & 384 & 128 \\
Hidden Dim       & 3072 & 3072 & 3072 & 3072 & 1536 & 1536 & 1536 & 1536 & 512 \\
Num of Heads     & 12 & 12 & 12 & 12 & 6 & 6 & 6 & 6 & 2 \\
Head Dim         & 64 & 64 & 64 & 64 & 64 & 64 & 64 & 64 & 64 \\
Batch Size       & 512 & 64 & 32 & 32 & 512 & 64 & 32 & 32 & 256 \\
Learning Rate    & 5e-5 & 5e-5 & 3e-5 & 5e-5 & 1e-4 & 5e-5 & 3e-5 & 5e-5 & 5e-4 \\
Num of Steps     & 20K & 75K & 4 epochs & 15 epochs & 150K & 75K & 10 epochs & 25 epochs & 300 epochs \\
\bottomrule
\end{tabular}
\end{scriptsize}
\end{center}
\caption{Hyperparameters for all experiments.}
\label{tab:hyperparameters}
\end{table*}

\subsection{Wavelets}
\label{sec:wavelets}

Multiresolution analysis reorganizes a signal into different strata or resolutions, where the lower resolutions contain information summarizing global features of the signal, and  higher ones capture fine-grained details of the signal.  These strata are constructed iteratively, and we briefly describe the construction.  For simplicity, we will focus on signals that are 1-dimensional (e.g., time series), but all of the following extends to signals of arbitrary dimension.


Construction of the filters $L$ and $H$, which has been the subject of extensive research~\cite{daubechies1992ten}, guarantees that the wavelet analysis operator $W:L_0\rightarrow (L_N, H_N, H_{N-1}, \ldots, H_{1})$ is a linear isometry. That is,
\begin{align*}
\norm{L_0}^{2} = \norm{W(L_0)}^{2} = \norm{L_N}^2 + \sum_{0 < k \leq N} \norm{H_k}^2.
\end{align*}
Thus, the reorganization of a signal under the action of $W$ has a linear inverse, $W^{-1}=W^*$, which is the adjoint of $W$. 


A wavelet decomposition allows one to alter values within particular resolution in an attempt to suppress information that we wish to ignore~\cite{donoho1995noising}. For example, if we wish to ignore fine-grained changes in the data, we would apply a threshold to the higher level, and replace small values in them with zero, and then reconstruct a new signal that has had the higher frequency information purged from it.

There is a tight connection between a signal's smoothness and the energy captured by the coefficients in $H_k$~\cite{donoho1995adapting}. Informally, most of the energy of a smooth function is captured in the low frequency wavelet coefficients, and the energy captured by the high-frequency coefficients decreases as $k$ decreases. Smoothness can be characterized by the rate of decrease. A reasonable heuristic is that  smooth functions have sparse representations, and most energy lies within the lower strata of a wavelet decomposition. However, the smoothness of the wavelet basis also has a role here. The Haar wavelet system is not continuous, and one consequence of this is that energy of smooth functions leaks into higher frequency strata.  The result is that wavelet decompositions based on the Haar system are typically not as sparse as decompositions based on more sophisticated wavelet systems. 

Nevertheless, the Haar system is useful for its simplicity. 
Using the Haar system, we can bound the high frequency wavelet coefficients of $\mathcal{P}=QK^T$ in terms of the corresponding wavelet coefficients of $Q$ and $K$. The  filters $L$ and $H$ of the Haar system are $L\coloneqq (2^{-1/2}, 2^{-1/2})$ and $H\coloneqq (2^{-1/2}, -2^{-1/2})$. 
Column $j$ of $\mathcal{P}$ convolved with $H$ has entry $i$ equal to $2^{-1/2}(\mathcal{P}_{i, j} - \mathcal{P}_{i+1, j})$. If we denote the downsampling operation by $\operatorname{down}$ then entry $i$ of $H_1^{\mathcal{P}_{\cdot,j}}\coloneqq \operatorname{down}(\mathcal{P}_{\cdot,j}*H)$  is $2^{-1/2}(\mathcal{P}_{2i, j} - \mathcal{P}_{2i+1, j})$. Hence,
\begin{equation}
\begin{split}
\norm{H_1^{\mathcal{P}_{\cdot,j}}}_1 &= \sum_{i}\left|2^{-1/2}(\mathcal{P}_{2i, j} - \mathcal{P}_{2i+1, j} )\right| 
= \sum_{i}\left|2^{-1/2}(Q_{2i} - Q_{2i+1}) K_{j}^T\right| \\
&\leq \sum_{i}\norm{2^{-1/2}(Q_{2i} - Q_{2i+1})}_p\norm{K_j}_q 
= \left(\sum_{i} \norm{(H^{Q}_1)_{i,\cdot}}_p\right)\norm{K_j}_q
\end{split}
\end{equation}
where $(H^{Q}_1)_{i,\cdot}\in\R^{d}$ and $1/p+1/q=1$. Informally, the smoothness of the columns of $\mathcal{P}$ is controlled by the smoothness of the columns of $Q$. A nearly identical statement can be made about the smoothness of the rows of $\mathcal{P}$ and the smoothness of the rows of $K^T$. The same argument extends to the other strata of the wavelet decomposition. It is natural to compare this estimate to the bounds used in Lemma \ref{lem:error_bound_of_mu}.



\end{document}